\theoremstyle{my_style}
\newtheorem{definition}{Definition}[section]
\newtheorem{remark}{Remark}[section]
\newtheorem{lemma}{Lemma}[section]
\newcommand{\cH}{\mathcal{H}}
\newcommand{\cX}{\mathcal{X}}
\newcommand{\cL}{\mathcal{L}}
\newcommand{\bx}{\boldsymbol{x}}
\newcommand{\by}{\boldsymbol{y}}
\newcommand{\bz}{\boldsymbol{z}}
\newcommand{\ba}{\boldsymbol{a}}
\newcommand{\bb}{\boldsymbol{b}}
\newcommand{\bw}{\boldsymbol{w}}
\newcommand{\bbf}{\boldsymbol{f}}
\newcommand{\bg}{\boldsymbol{g}}
\newcommand{\beq}{\begin{equation}}
\newcommand{\eeq}{\end{equation}}
\newcommand{\beqn}{\begin{eqnarray}}
\newcommand{\eeqn}{\end{eqnarray}}
\newcommand{\beqns}{\begin{eqnarray*}}
\newcommand{\eeqns}{\end{eqnarray*}}
\newcommand{\ii}{i}
\newcommand{\R}{\mathbb{R}}
\newcommand{\HH}{\mathbb{H}}
\newcommand{\XX}{\mathbb{X}}
\newcommand{\C}{\mathbb{C}}
\newcommand{\F}{\mathbb{F}}
\newcommand{\N}{\mathbb{N}}
\newcommand{\bdiv}{\mathop{\operator@font div}}
\newcommand{\diag}{\mathop{\operator@font diag}}
\newcommand{\conv}{\mathop{\operator@font conv}}
\newcommand{\minim}{\mathop{\operator@font minimize}}
\newcommand{\maxim}{\mathop{\operator@font maximize}}
\newcommand{\sign}{\mathop{\operator@font sign}}
\newcommand{\proj}{\mathop{\operator@font proj}}
\newcommand{\spa}{\mathop{\operator@font span}}
\newcommand{\epi}{\mathop{\operator@font epi}}
\newcommand{\dom}{\mathop{\operator@font dom}}
\newcommand{\real}{\mathop{\operator@font Re}}
\newcommand{\imag}{\mathop{\operator@font Im}}
\newcommand{\sinc}{\mathop{\operator@font sinc}}
\begin{document}
%
\title{Complex Support Vector Machines for Regression and Quaternary Classification}

\author{Pantelis Bouboulis,~\IEEEmembership{Member,~IEEE,}
        and~Sergios Theodoridis,~\IEEEmembership{Fellow,~IEEE,}
        and~Charalampos Mavroforakis,
        and Leoni Dalla
\thanks{Copyright (c) 2013 IEEE. Personal use of this material is permitted.
However, permission to use this material for any other purposes must be obtained from the
IEEE by sending a request to pubs-permissions@ieee.org.}
\thanks{This research has been co-financed by the European Union (European Social
Fund – ESF) and Greek national funds through the Operational Program
''Education and Lifelong Learning'' of the National Strategic Reference
Framework (NSRF) - Research Funding Program: Aristeia I: 621.}
\thanks{P. Bouboulis is with the Department
of Informatics and Telecommunications, University of Athens, Greece,
e-mail: panbouboulis@gmail.com.}
\thanks{S. Theodoridis is with the Department
of Informatics and Telecommunications, University of Athens, Greece,
and the Research Academic Computer Technology Institute, Patra, Greece.
e-mail: stheodor@di.uoa.gr.}%
\thanks{Ch. Mavroforakis is with the Department of Computer Science,
       Data Management Lab, Boston University, Boston, MA 02215, USA.
       e-mail: cmav@bu.edu.}%
\thanks{L. Dalla is with the Department of Mathematics,
       University of Athens, Greece.
       email: ldalla@math.uoa.gr.}%
}


%


\maketitle

\begin{abstract}
The paper presents a new framework for complex Support Vector Regression as well as Support Vector Machines for quaternary classification.  The method exploits the notion of widely linear estimation to model the input-out relation for complex-valued data and considers two cases: a) the complex data are split into their real and imaginary parts and a typical real kernel is employed to map the complex data to a complexified feature space and b) a pure complex kernel is used to directly map the data to the induced complex feature space.  The recently developed Wirtinger's calculus on complex reproducing kernel Hilbert spaces (RKHS) is employed in order to compute the Lagrangian and derive the dual optimization problem. As one of our major results, we prove that any complex SVM/SVR task is equivalent with solving two real SVM/SVR tasks exploiting a specific real kernel which is generated by the chosen complex kernel. In particular, the case of pure complex kernels leads to the generation of new kernels, which have not been considered before. In the classification case, the proposed framework inherently splits the complex space into four parts. This leads naturally in solving the four class-task (quaternary classification), instead of the typical two classes of the real SVM. In turn, this rationale can be used in a multiclass problem as a split-class scenario based on four classes, as opposed to the one-versus-all method; this can lead to significant computational savings.
Experiments demonstrate the effectiveness of the proposed framework for regression and classification tasks that involve complex data.
\end{abstract}

\begin{IEEEkeywords}
Support Vector Machines, complex valued data, complex kernels, widely linear estimation, regression, classification
\end{IEEEkeywords}

%
\IEEEpeerreviewmaketitle

\section{Introduction}\label{SEC:INTRO}
The SVM framework has become a popular toolbox for addressing real world applications that involve non-linear classification and regression tasks. In its original form, the SVM method is a nonlinear generalization of the \textit{Generalized Portrait} algorithm, which has been developed in the former USSR in the 1960s. The introduction of non-linearity was carried out via a computationally elegant way known today as the \textit{kernel trick} \cite{Scholkopf_2002_2276}. Usually, this trick is applied in a black-box rationale, where one simply replaces dot products with a positive definite kernel function.
The successful application of the kernel trick in SVMs  has sparked a new breed of technics for addressing non linear tasks, the so called \textit{kernel-based methods}. Currently, kernel-based algorithms constitute a popular tool employed in a variety of scientific domains, ranging from adaptive filtering \cite{Eref2, Muller01} and image processing to biology and nuclear physics \cite{Scholkopf_2002_2276, Theodoridis_2008_9253, Vapnik_1999_9348, ShaweTaylor_2004_9255,  Liu_2010_10644, Kivinen_2004_11230, Engel_2004_11231, Slavakis_2008_9257, Liu_2008_10645, Slavakis_2009_10648, Mavroforakis_2007_4002, Theodoridis_2007_9346, Bouboulis_2010_10642, Bouboulis_2011_10643, Schlkopf_2004_11445, Muller}. The key mathematical notion underlying these methods is that of RKHS. These are inner product spaces in which the pointwise evaluation functional is continuous. Through the kernel trick, the original data are transformed into a higher dimensional RKHS $\cH$ (possibly of infinite dimension) and linear tools are applied to the transformed data in the so called feature space $\cH$. This is equivalent to solving a non-linear problem in the original space. Furthermore, inner products in $\cH$ can efficiently be computed via the specific kernel function $\kappa$ associated with the RKHS $\cH$, disregarding the actual structure of the space. Recently, this rationale has been generalized, so that the task simultaneously learns the so called kernel in some fashion, instead of selecting it a priori, in the context of multiple kernel learning (MKL) \cite{Bach04a, Bach08a, Gonen, Sonnenburg}.

Although  the theory of RKHS has been developed by mathematicians for general complex spaces, most kernel-based methods employ real kernels. This is largely due to the fact that many of them originated as variants of the original SVM formulation, which was targeted to treat real data. However, in modern applications, complex data arise frequently in areas as diverse as communications, biomedicine, radar, etc. Although all the respective algorithms that employ complex data (e.g., in communications) can also be cast in the real domain (disregarding any type of complex algebra), by splitting the complex data into two parts and working separately,  this approach usually leads to more intricate expressions and tedious calculations.  The complex domain provides a convenient and elegant representation for such data, but also a natural way to preserve their characteristics and to handle transformations that need to be performed.

Hence, the design of SVMs suitable for treating problems of complex and/or multidimensional outputs has attracted some attention in the machine learning community. Perhaps the most complete works, which attempt to generalize the SVM rationale in this fashion, are a) Clifford SVM \cite{BayroCorrochano_2010_10641} and b) division algebraic SVR \cite{Shilton2007,Shilton2010,Shilton2012}. In Clifford SVM, the authors use Clifford algebras to extend the SVM framework to multidimensional outputs. Clifford algebras belong to a type of associative algebras, which are used in mathematics to generalize the complex numbers, quaternions and several other hypercomplex number systems. On the other hand, in division algebraic SVR, division algebras are employed for the same purpose. These are algebras, closely related to the Clifford algebras, where all non-zero elements have multiplicative inverses. In a nutshell, Clifford algebras are more general and they can be employed to create a general algebraic framework (i.e., addition and multiplication operations) in any type of vector spaces (e.g., $\R$, $\R^2$, $\R^3$, $\dots$), while the division algebras are only four: the real numbers, the complex numbers ($\R^2$), the quaternions ($\R^4$) and the octonions ($\R^8$). This is due to the fact that the need for inverses can only be satisfied in these four vector spaces. Although Clifford algebras are more general, their limitations (e.g., the lack of inverses) make them a difficult tool to work with, compared to the division algebras. Another notable attempt that pursues similar goals is the multiregression SVMs of \cite{SVM_multiregression}, where the outputs are represented simply as vectors and an $\epsilon$-insensitive loss function is adopted. Unfortunately this approach does not result in a well defined dual problem.
In contrast to the more general case of hyper-complex outputs, where applications are limited \cite{Ujang}, complex valued SVMs have been adopted by a number of authors for the beamforming problem (e.g., \cite{BeamSVM1, BeamSVM2}), although restricted to the simple linear case.

It is important to emphasize that most of the aforementioned efforts to apply the SVM rationale to complex and hypercomplex numbers are limited to the case of the output data\footnote{In \cite{Corrochano2010} the authors consider also a  Gabor kernel function which takes multivector inputs.}. These methods consider a multidimensional output, which can be represented, for example, as a complex number or a quaternion, while the input data are real vectors. In some cases, complex input data are considered as well, but in a rather trivial way, i.e., splitting the data into their real and imaginary parts. Moreover, these methods employ real valued kernels to model the input-out relationship, breaking it down to its multidimensional components. However, in this way many of the rich geometric characteristics of complex and hypercomplex spaces are lost.

In this paper, we adopt an alternative rationale. To be in line with the current trend in complex signal processing, we employ the so-called widely linear estimation process, which has been shown to perform better than the conventional linear estimation process \cite{Boub_ACKLMS, Adali_2010_10640, Novey_2008_10638, Mandic_2009_10646, Kuh09}. This means that we model the input-out relationship as a sum of  two parts. The first is linear with respect to the input vector, while the second is linear with respect to its conjugate. Furthermore, we consider two cases to generalize the SVM framework to complex spaces. In the first one, the data are split into their real and imaginary parts and typical well established real kernels are employed to map the data into a complexified RKHS. This scenario bears certain similarities with other preexisting technics that also split the output into two parts (e.g., \cite{Shilton2010}). The difference with our technique is that the widely linear estimation process is employed to model the input-out relationship of the SVM.  In the second case, the modeling takes place directly into complex RKHS, which are generated by pure complex kernels\footnote{The term ``pure complex kernels'' refers to complex valued kernels with complex variables, that are complex analytic.}, instead of real ones. In that fashion, the geometry of the complex space is preserved.  Moreover, we show that in the case of complex SVMs, the widely linear approach is a \textit{necessity}, as the alternative path would lead to a significantly restricted model. In order to compute the gradients, which are required by the Karush-Kuhn-Tucker (KKT) conditions and the dual, we employ the generalized Wirtinger Calculus introduced in \cite{Bouboulis_2011_10643}.
As one of our major results, we prove that working in a complex RKHS $\HH$, with a pure complex kernel $\kappa_{\C}$, is equivalent to solving two problems in a real RKHS $\cH$, albeit with a specific real kernel $\kappa_{\R}$, which is induced by the complex $\kappa_{\C}$. It must be pointed out that these induced kernels are not trivial. For example, the exploitation of the complex Gaussian kernel results in an induced kernel different from the standard real Gaussian RBF.

To summarize, the main contribution of our work is the development of a complete mathematical framework suitable for treating any SVR/SVM task, that involves complex data, in an elegant and uniform manner. Moreover, we provide a new way of treating a special multi-classification problem (i.e., quaternary classification).
Our emphasis in this paper is to outline the theoretical development and to verify the validity of our results via some simulation examples. The paper is organized as follows: In Section \ref{SEC:RKHS} the main mathematical background regarding RKHS is outlined and the differences between a real RKHS and a complex RKHS are highlighted. The main contributions of the paper can be found in Sections \ref{SEC:CSVR} and \ref{SEC:CSVM}, where the theory and the generalized complex SVR and SVM algorithms are developed, respectively. The complex SVR developed there, is suitable for general complex valued function estimation problems defined on complex domains. The proposed complex SVM rationale, on the other hand, is suitable for quaternary (i.e., four class) classification, in contrast to the binary classification carried out by the real SVM approach.  The experiments that are presented in Section \ref{SEC:EXP} demonstrate certain cases where the use of the pure complex Gaussian kernel in the SVR rationale offers significant advantages over the real Gaussian kernel. In the SVM, besides the new case of quaternary classification, experiments also show how the exploitation of complex data improves the classification accuracy. Finally, Section \ref{SEC:CONCL} contains some concluding remarks.

\section{Real and Complex RKHS}\label{SEC:RKHS}
We devote this section to present the notation that is adopted in the paper and to summarize the basic mathematical background regarding RKHS. Throughout the paper, we will denote the set of all integers, real and complex numbers by $\N$, $\R$ and $\C$, respectively. The imaginary unit is denoted as $\ii$, while $z^*$ denotes the conjugate of $z$. Vector or matrix valued quantities appear in boldfaced symbols.

An RKHS \cite{Aronszajn_1950_9268} is a Hilbert space $\cH$ over a field $\F$ for which there exists a positive definite kernel function $\kappa:\cX\times \cX\rightarrow\F$ with the following two important properties: a) For every $x\in \cX$, $\kappa(\cdot,x)$ belongs to $\cH$ and b) $\kappa$ has the so called \textit{reproducing property}, i.e.,
$f(x)=\langle f,\kappa(\cdot, x)\rangle_\cH, \textrm{ for all } f\in\cH$,
in particular $\kappa(x,y)=\langle \kappa(\cdot, y), \kappa(\cdot, x)\rangle_\cH$.
The map $\Phi:\cX\rightarrow\cH:\Phi(x)=\kappa(\cdot,x)$ is called the \textit{feature map} of $\cH$.  In the case of complex spaces (i.e., $\F=\C$) the inner product is sesqui-linear (i.e., linear in one argument and antilinear in the other)  and Hermitian, i.e., $\kappa(x,y)=\left(\langle \kappa(\cdot, x), \kappa(\cdot, y)\rangle_\cH\right)^* = \kappa^*(y,x)$. In the real case, however, this is simplified to $\kappa(x,y)=\langle \kappa(\cdot, y), \kappa(\cdot, x)\rangle_\cH=\langle \kappa(\cdot, x), \kappa(\cdot, y)\rangle_\cH$. In the following, we will denote by $\HH$ a complex RKHS and by $\cH$ a real RKHS. Moreover, in order to distinguish the two cases, we will use the notations $\kappa_{\R}$ and $\Phi_{\R}$ to refer to a real kernel and its corresponding feature map, instead of the notation $\kappa_{\C}$, $\Phi_{\C}$, which is reserved for pure complex kernels.

A variety of kernel functions can be found in the respective literature \cite{Scholkopf_2002_2276, Theodoridis_2008_9253, ShaweTaylor_2004_9255, Bouboulis_2011_11457, Paulsen_2009_11235}. In this paper we will use the popular \textit{real Gaussian kernel}, i.e., $\kappa_{\R^\nu,t}(\bx,\by) : = \exp\left(-t\sum_{k=1}^{\nu}(x_k-y_k)^2\right)$,
defined for $\bx, \by \in \R^\nu$, and the \textit{complex Gaussian kernel}, i.e.,
$\kappa_{\C^\nu,t}(\bz,\bw) : = \exp\left(-t\sum_{k=1}^{\nu}(z_k-w_k^*)^2\right)$,
where $\bz,\bw\in\C^\nu$, $z_k$ denotes the $k$-th component of the complex vector $\bz\in\C^\nu$ and $\exp(\cdot)$ is the extended exponential function in the complex domain. In both cases $t$ is a free positive parameter that controls the shape of the kernel.

Besides the complex RKHS produced by the associated complex kernels, such as the aforementioned ones, one may construct a complex RKHS as a Cartesian product of a real RKHS with itself, in a fashion similar to the identification of the field of complex numbers, $\C$, to $\R^2$. This technique is called \textit{complexification} of a real RKHS and the respective Hilbert space is called \textit{complexified} RKHS.  Let $\cX\subseteq\R^\nu$ and define the spaces $\cX^2\equiv \cX\times \cX\subseteq\R^{2\nu}$ and $\XX=\{\bx+\ii\by; \bx,\by\in \cX\}\subseteq\C^{\nu}$, where the latter is equipped with a complex inner product structure. Let $\cH$ be a real RKHS associated with a real kernel $\kappa_{\R}$ defined on $\cX^2\times \cX^2$ and let $\langle\cdot,\cdot\rangle_\cH$ be its corresponding inner product. Then, every $f\in\cH$ can be regarded as a function defined on either $\cX^2$ or $\XX$, i.e., $f(\bz) = f(\bx+\ii\by) = f(\bx,\by)$.
Moreover, we define the Cartesian product of $\cH$ with itself, i.e., $\cH^2=\cH\times\cH$. It is easy to verify that $\cH^2$ is also a Hilbert space with inner product
\begin{align}
\langle \bbf, \bg\rangle_{\cH^2} = \langle f^r, g^r\rangle_\cH + \langle f^i, g^i\rangle_\cH,
\end{align}
for $\bbf=(f^r,f^i)$, $\bg=(g^r,g^i)$. Our objective is to enrich $\cH^2$ with a complex structure (i.e., with a complex inner product). To this end, we define the space $\HH= \{f=f^r + \ii f^i;\;f^r,f^i\in\cH\}$
equipped with the complex inner product:
\begin{align}\label{EQ:complex_inner}
\langle f, g\rangle_{\HH}= \langle f^r, g^r\rangle_\cH + \langle f^i, g^i\rangle_\cH +
                 \ii\left(\langle f^i, g^r\rangle_\cH - \langle f^r, g^i\rangle_\cH\right),
\end{align}
for $f=f^r + \ii f^i$, $g=g^r + \ii g^i$. It is not difficult to verify that the complexified space $\HH$ is a complex RKHS with kernel $\kappa$ \cite{Paulsen_2009_11235}. We call $\HH$ the complexification of $\cH$. It can readily be seen, that, although $\HH$ is a complex RKHS, its respective kernel is real (i.e., its imaginary part is equal to zero). To complete the presentation of the complexification procedure, we need a technique to implicitly map the data samples from the complex input space to the complexified RKHS $\HH$. This can be done using the simple rule:
\begin{align}\label{EQ:Phi_map}
\begin{array}{ll}
\bar\Phi_{\C}(\bz)&=\bar\Phi_{\C}(\bx+ \ii\by) = \bar\Phi_{\C}(\bx,\by)\\
&= \Phi_{\R}(\bx,\by) + \ii\Phi_{\R}(\bx,\by),
\end{array}
\end{align}
where $\Phi_{\R}$ is the feature map of the real reproducing kernel $\kappa_{\R}$, i.e., $\Phi_{\R}(\bx,\by)=\kappa_{\R}(\cdot, (\bx, \by))$ and $\bz = \bx + \ii\by$. As a consequence, observe that:
\begin{align*}
\langle\bar\Phi_{\C}(\bz), \bar\Phi_{\C}(\bz')\rangle_{\HH} &= 2\langle\Phi_{\R}(\bx,\by), \Phi_{\R}(\bx',\by')\rangle_{\cH}\\
 &= 2\kappa_{\R}( (\bx',\by'), (\bx,\by) ),
\end{align*}
for all $\bz, \bz'\in\HH$.
We have to emphasize that a complex RKHS $\HH$ (whether it is constructed through the complexification procedure, or it is produced by a complex kernel) can, always, be represented as a Cartesian product of a Hilbert space with itself, i.e., we can, always, identify $\HH$ with a \textit{double real space} $\cH^2$. Furthermore, the complex inner product of $\HH$ can always be related to the real inner product of $\cH$ as in (\ref{EQ:complex_inner}).

In order to compute the gradients of real valued cost functions, which are defined on complex domains, we adopt the rationale of Wirtinger's calculus \cite{Wirtinger_1927_10651}. This was brought into light recently \cite{Adali_2010_10640,  Novey_2008_10638, Li_2008_10664}, as a means to compute, in an efficient and elegant way,  gradients of real valued cost functions that are defined on complex domains ($\C^\nu$), in the context of widely linear processing \cite{Mandic_2009_10646, Picinbono_1995_10647}. It is based on simple rules and principles, which bear a great resemblance to the rules of the standard complex derivative, and it greatly simplifies the calculations of the respective derivatives. The difficulty with real valued cost functions is that they do not obey the Cauchy-Riemann conditions and are not differentiable in the complex domain. The alternative to Wirtinger's calculus would be  to consider the complex variables as pairs of two real ones and employ the common real partial derivatives. However, this approach, usually, is more time consuming and leads to more cumbersome expressions. In \cite{Bouboulis_2011_10643}, the notion of Wirtinger's calculus was extended to general complex Hilbert spaces, providing the tool to compute the gradients that are needed to develop kernel-based algorithms for treating complex data. In \cite{Bouboulis_2012_1} the notion of Wirtinger calculus was extended to include subgradients in RKHS.

\section{Complex Support Vector Regression}\label{SEC:CSVR}

We begin the treatment of the complex case with the complex SVR rationale, as this is a direct generalization of the real SVR.
Suppose we are given training data of the form $\{(\bz_n, d_n);\;n=1,\dots,N\}\subset\cX\times\C$, where $\cX=\C^\nu$ denotes the space of input patterns. As $\bz_n$ is complex, we denote by $\bx_n$ its real part and by $\by_n$ its imaginary part respectively, i.e., $\bz_n = \bx_n + \ii\by_n$, $n=1,\dots, N$. Similarly, we denote by $d^r_n$ and $d_n^i$ the real and the imaginary part of $d_n$, i.e., $d_n = d^r_n + \ii d^i_n$, $n=1,\dots,N$.

\subsection{Dual Channel SVR}\label{SEC:DRC}
A straightforward approach for addressing this problem (as well as any problem related with complex data) is by considering two different problems in the real domain. This technique is usually referred to as the \textit{dual real channel  (DRC) approach}\cite{Mandic_2009_10646}. That is, the training data are split into two sets $\{((\bx_n, \by_n)^T, d^r_n);\;n=1,\dots,N\}\subset\R^{2\nu}\times\R$ and $\{((\bx_n, \by_n)^T, d^i_n);\;n=1,\dots,N\}\subset\R^{2\nu}\times\R$, and a support vector regression is performed on each set of data using a real kernel $\kappa_{\R}$ and its corresponding RKHS. We will show in the following Sections that the DRC approach is equivalent to the complexification procedure \cite{Bouboulis_2011_10643} described in Section \ref{SEC:RKHS}. The latter, however, often provides a context that enables us to work with complex data compactly and elegantly, as one may employ Wirtinger calculus to compute the respective gradients and develop algorithms directly in complex form \cite{Bouboulis_2011_10643}.

In contrast to the complexification procedure, we emphasize that the pure complex approach (where one directly exploits a complex RKHS) considered in the next subsection is quite different from the DRC rationale. We will develop a framework for solving such a problem on the complex domain employing pure complex kernels, instead of real ones. Nevertheless, we will show that using complex kernels for SVR is equivalent with solving two real problems using a real kernel. This kernel, however, is induced by the selected complex kernel and {\it it is not one of the standard kernels} appearing in machine learning literature. For example, the use of the complex Gaussian kernel induces a real kernel, which is not the standard real Gaussian RBF (see Figure \ref{FIG:feature_map}). We demonstrated in \cite{Bouboulis_2012_1, Boub_ACKLMS}, although in a different context than the one we use here, that the DRC approach and the pure complex approaches give, in general, different results.
Depending on the case, the pure complex approach might show increased performance over the DRC approach and vice versa.

\begin{figure}[t]
\begin{center}
\includegraphics[scale=0.7]{./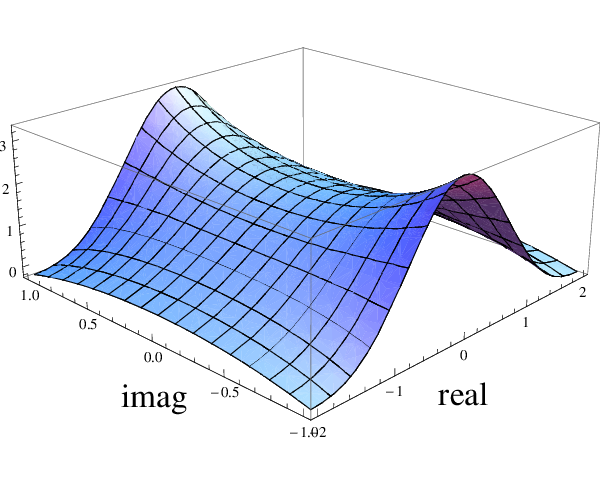}
\end{center}
\caption{The function $\kappa^r_{\C}((\cdot,\cdot)^T,(0,0)^T)$ of the induced real feature space of the complex Gaussian kernel.}\label{FIG:feature_map}
\end{figure}

\subsection{Pure Complex SVR}\label{SEC:complex_SVR}
Prior to the development of the generalized complex SVR rationale, we investigate some significant properties of the complex kernels. In the following, we assume that $\HH$ is a complex RKHS with kernel $\kappa_{\C}$. We can decompose $\kappa_{\C}$ into its real and imaginary parts, i.e., $\kappa_{\C}(\bz,\bz') = \kappa^r_{\C}(\bz,\bz') + \ii \kappa^i_{\C}(\bz,\bz')$,
where $\kappa^r_{\C}(\bz,\bz'), \kappa^i_{\C}(\bz,\bz')\in\R$.
As any complex kernel is Hermitian (see Section \ref{SEC:RKHS}), we have that $\kappa^*_{\C}(\bz,\bz') = \kappa_{\C}(\bz',\bz)$ and hence we take
\begin{align}
\kappa^r_{\C}(\bz,\bz') &= \kappa^r_{\C}(\bz',\bz), \label{EQ:kernel_real_part}\\
\kappa^i_{\C}(\bz,\bz') &= -\kappa^i_{\C}(\bz',\bz). \label{EQ:kernel_imag_part}
\end{align}

\begin{lemma}\label{LEM:imaginary_part_of_complex_kernel}
The imaginary part of any complex kernel, $\kappa_{\C}$, satisfies:
\begin{align}\label{EQ:imag_part_kernel}
\displaystyle{\sum_{n,m=1}^N} c_n c_m \kappa^i_{\C}(\bz_n,\bz_m) = 0,
\end{align}
for any $N>0$ and any selection of $c_1,\dots,c_N\in\C$ and $\bz_1,\dots, \bz_N\in\cX$.
\end{lemma}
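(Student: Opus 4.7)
The plan is to exploit equation (\ref{EQ:kernel_imag_part}), namely the antisymmetry $\kappa^i_{\C}(\bz,\bz') = -\kappa^i_{\C}(\bz',\bz)$, which is the immediate consequence of the Hermitian property of $\kappa_{\C}$ stated right before the lemma. This alone should be enough to make the double sum collapse.

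First I would observe that by setting $\bz = \bz'$ in (\ref{EQ:kernel_imag_part}) we get $\kappa^i_{\C}(\bz_n,\bz_n) = 0$, so the diagonal terms of the sum vanish. This is not strictly needed, but it clarifies why only the off-diagonal contributions matter. Then I would perform the standard symmetrization trick: rename the dummy indices $n \leftrightarrow m$ in the double sum to obtain
\begin{align*}
\sum_{n,m=1}^N c_n c_m \kappa^i_{\C}(\bz_n,\bz_m)
 = \sum_{n,m=1}^N c_m c_n \kappa^i_{\C}(\bz_m,\bz_n).
\end{align*}
Using commutativity of complex multiplication, $c_m c_n = c_n c_m$, together with the antisymmetry (\ref{EQ:kernel_imag_part}), the right-hand side equals $-\sum_{n,m=1}^N c_n c_m \kappa^i_{\C}(\bz_n,\bz_m)$. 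Thus the sum equals its own negative and must therefore be zero.

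There is essentially no main obstacle here; the statement is a direct algebraic consequence of the Hermitian structure of a complex kernel. The only subtle point worth flagging in the write-up is that the coefficients appear as $c_n c_m$ rather than the more common $c_n c_m^*$ (as in the positive-definiteness condition), so one must be careful that it is ordinary commutativity, not conjugate symmetry, that is being invoked when swapping indices. That is why the cancellation works cleanly for any choice of complex scalars $c_1,\dots,c_N$, without needing any further assumption on $\kappa_{\C}$ beyond the Hermitian property already noted in Section \ref{SEC:RKHS}.
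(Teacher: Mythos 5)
Your proposal is correct and follows essentially the same argument as the paper: apply the antisymmetry in (\ref{EQ:kernel_imag_part}), swap the dummy indices $n \leftrightarrow m$, and conclude that the sum equals its own negative and hence vanishes. The extra remarks on the vanishing diagonal terms and on commutativity of $c_n c_m$ are fine but not needed beyond what the paper already does.
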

\begin{proof}
Exploiting (\ref{EQ:kernel_imag_part}) and rearranging the indices of the summation we get:
\begin{align*}
\displaystyle{\sum_{n,m=1}^N} c_n c_m \kappa^i_{\C}(\bz_n,\bz_m) &= - \displaystyle{\sum_{n,m=1}^N} c_n c_m \kappa^i_{\C}(\bz_m,\bz_n)\\
&= - \displaystyle{\sum_{m,n=1}^N} c_m c_n \kappa^i_{\C}(\bz_n,\bz_m).
\end{align*}
Hence, $2\displaystyle{\sum_{n,m=1}^N} c_n c_m \kappa^i_{\C}(\bz_n,\bz_m) = 0$
and the result follows immediately.
\end{proof}

\begin{lemma}\label{LEM:induced_kernel}
If $\kappa_{\C}(\bz,\bz')$ is a complex kernel defined on $\C^\nu\times\C^\nu$, then its real part, i.e.,
\begin{align}\label{EQ:induced_kernel}
\kappa^{r}_{\C}\left(\left(\begin{matrix}\bx\cr\by\end{matrix}\right),\left(\begin{matrix}\bx'\cr\by'\end{matrix}\right)\right) = \real(\kappa_{\C}(\bz,\bz')),
\end{align}
where $\bz=\bx+\ii\by$, $\bz'=\bx'+\ii\by'$, is a real kernel defined on $\R^{2\nu}\times\R^{2\nu}$. We call this kernel the induced real kernel of $\kappa_{\C}$.
\end{lemma}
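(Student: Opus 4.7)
The plan is to verify the two defining properties of a real-valued positive definite kernel: symmetry in its two arguments and positive semi-definiteness of the associated Gram matrices.

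First, I would dispatch symmetry directly from equation~(\ref{EQ:kernel_real_part}). Since $\kappa^r_{\C}(\bz,\bz') = \kappa^r_{\C}(\bz',\bz)$ for all $\bz, \bz' \in \C^\nu$, and the identification $\bz \leftrightarrow (\bx,\by)^T$ is a bijection between $\C^\nu$ and $\R^{2\nu}$, the induced function on $\R^{2\nu} \times \R^{2\nu}$ inherits the symmetry property of its complex counterpart. No real work is needed here.

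The substantive step is positive semi-definiteness. Given any $N>0$, any real coefficients $c_1, \dots, c_N \in \R$, and any points $(\bx_n,\by_n)^T \in \R^{2\nu}$, I must show
\begin{align*}
\sum_{n,m=1}^N c_n c_m \, \kappa^r_{\C}(\bz_n, \bz_m) \geq 0,
\end{align*}
where $\bz_n = \bx_n + \ii \by_n$. I would exploit the positive definiteness of $\kappa_{\C}$ itself: since the real coefficients $c_n$ are, in particular, complex numbers with $c_n^* = c_n$, positive definiteness of $\kappa_{\C}$ yields
\begin{align*}
\sum_{n,m=1}^N c_n c_m \, \kappa_{\C}(\bz_n,\bz_m) \geq 0.
\end{align*}
Decomposing $\kappa_{\C} = \kappa_{\C}^r + \ii \kappa_{\C}^i$, the imaginary-part contribution vanishes by Lemma~\ref{LEM:imaginary_part_of_complex_kernel} applied to the (real, hence complex) scalars $c_n$. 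Therefore the real-part sum equals the whole nonnegative expression, which gives the inequality.

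The main potential subtlety, and what I would be careful to flag, is the convention for positive definiteness in the complex setting: one uses $c_n^* c_m$, not $c_n c_m$. The argument above works precisely because I restrict to \emph{real} coefficients, which is exactly what is needed to certify that the induced function is a real kernel on $\R^{2\nu}$. Once symmetry and real-coefficient positive semi-definiteness are both in place, the conclusion that $\kappa^r_{\C}$ is a bona fide real reproducing kernel on $\R^{2\nu} \times \R^{2\nu}$ follows immediately from the standard Moore--Aronszajn characterization.
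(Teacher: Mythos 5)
Your proposal is correct and follows essentially the same route as the paper's proof: symmetry from the Hermitian property via~(\ref{EQ:kernel_real_part}), and positive semi-definiteness by applying the positive definiteness of $\kappa_{\C}$ with real coefficients, splitting into real and imaginary parts, and invoking Lemma~\ref{LEM:imaginary_part_of_complex_kernel} to kill the imaginary contribution. Your explicit remark about the $c_n^* c_m$ convention collapsing to $c_n c_m$ for real coefficients is a helpful clarification the paper leaves implicit, but it does not change the argument.
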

\begin{proof}
As relation (\ref{EQ:kernel_real_part}) implies, $\kappa^r_{\C}$ is symmetric. Moreover, let $N>0$, $\alpha_1,\dots,\alpha_N\in\R$ and $\bz_1,\dots, \bz_N\in\cX$. As $\kappa_{\C}$ is positive definite, we have that
\begin{align*}
\displaystyle{\sum_{n,m=1}^N} \alpha_n \alpha_m \kappa_{\C}(\bz_n,\bz_m) \geq 0.
\end{align*}
However, splitting $\kappa_{\C}$ to its real and imaginary parts and exploiting Lemma \ref{LEM:imaginary_part_of_complex_kernel}, we take
\begin{align*}
\displaystyle{\sum_{n,m=1}^N} \alpha_n \alpha_m \kappa_{\C}(\bz_n,\bz_m) = \displaystyle{\sum_{n,m=1}^N} \alpha_n \alpha_m \kappa^r_{\C}(\bz_n,\bz_m)\\
 + \ii\displaystyle{\sum_{n,m=1}^N} \alpha_n \alpha_m \kappa^i_{\C}(\bz_n,\bz_m)
 = \displaystyle{\sum_{n,m=1}^N} \alpha_n \alpha_m \kappa^r_{\C}(\bz_n,\bz_m).
\end{align*}
Hence, $\displaystyle{\sum_{n,m=1}^N} \alpha_n \alpha_m \kappa^r_{\C}(\bz_n,\bz_m) \geq 0$.

As a last step, recall that $\kappa^r_{\C}$ may be regarded as defined either on $\C^\nu\times\C^\nu$ or $\R^{2\nu}\times\R^{2\nu}$. This leads to
\begin{align*}
\displaystyle{\sum_{n,m=1}^N} \alpha_n \alpha_m \kappa^r_{\C}\left( \left(\begin{matrix} \bx_n \cr \by_n \end{matrix}\right), \left(\begin{matrix} \bx_m \cr \by_m \end{matrix}\right)\right) \geq 0.
\end{align*}
We conclude that $\kappa^r_{\C}$ is a positive definite kernel on $\R^{2\nu}\times\R^{2\nu}$.
\end{proof}

At this point, we are ready to present the SVR rationale in complex RKHS. We transform the input data from $\cX$ to $\HH$, via the feature map $\Phi_{\C}$, to obtain the data $\{(\Phi_{\C}(\bz_n), d_n);\;n=1,\dots,N\}$.  In analogy with the real case and extending the principles of widely linear estimation to complex support vector regression, the goal is to find a function $T:\HH\rightarrow\C: T(f) = \langle f, w\rangle_{\HH} + \langle f^*, v\rangle_{\HH} + c$, for some $u,v\in\HH$, $c\in\C$, which is as flat as possible and has at most $\epsilon$ deviation from both the real and imaginary parts of the actually obtained values $d_n$, for all $n=1,\dots,N$. We emphasize that we employ the widely linear estimation function $S_1:\HH\rightarrow\C: S_1(f) = \langle f, w\rangle_{\HH} + \langle f^*, v\rangle_{\HH}$ instead of the usual complex linear function\footnote{All other attempts to generalize the SVR rationale to complex and hypercomplex spaces employed the standard complex linear function $S_2$.} $S_2:\HH\rightarrow\C: S_2(f) = \langle f, w\rangle_{\HH}$ following the ideas of \cite{Picinbono_1995_10647}, which are becoming popular in complex signal processing \cite{Took_2010_10650, Chevalier_2006_10661, Cacciapuoti_2008_10667} and have been generalized for the case of complex RKHS in \cite{Bouboulis_2012_1}. It has been established \cite{Picinbono_1994_10659, Picinbono_1997_10660}, that the widely linear estimation functions are able to capture the second order statistical characteristics of the input data, which are necessary if non-circular\footnote{Note that the issue of circularity has become quite popular recently in the context of complex adaptive filtering. Circularity is intimately related to rotation in the geometric sense. A complex random variable $Z$ is called circular, if for any angle $\phi$ both $Z$ and $Z e^{\ii\phi}$ (i.e., the rotation of $Z$ by angle $\phi$) follow the same probability distribution \cite{Mandic_2009_10646}.} input sources are considered.  Furthermore, as it has been shown in \cite{Boub_ACKLMS}, the exploitation of the traditional complex linear function excludes a significant percentage of linear functions from being considered in the estimation process. The correct and natural linear estimation in complex spaces is the widely linear one.

\begin{figure*}[t]
\begin{center}
\includegraphics[scale=1]{./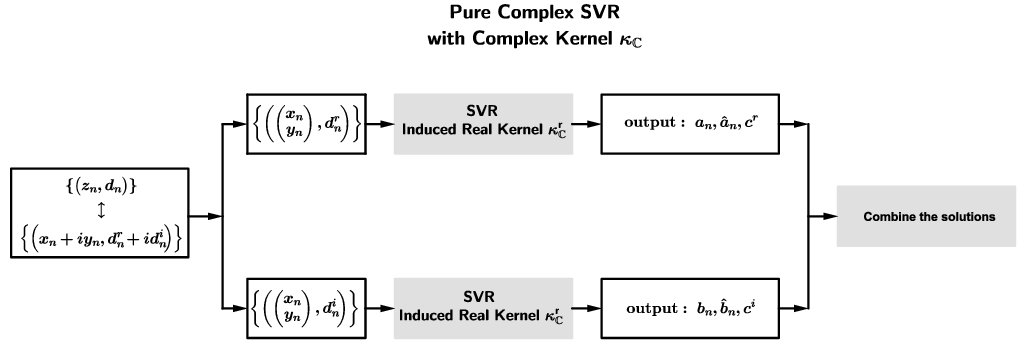}
\end{center}
\caption{Pure complex SVR. The difference with the dual channel approach is due to the incorporation of the induced real kernel $\kappa^r_{\C}$, which depends on the selection of the complex kernel $\kappa_{\C}$. In this context one exploits the complex structure of the space, which is lost in the dual channel approach.}\label{FIG:sketch}
\end{figure*}

Observe that at the training points, i.e., $\Phi_{\C}(\bz_n)$, $T$ takes the values $T(\Phi_{\C}(\bz_n))$.  Following similar arguments as  with the real case, this is equivalent with finding a complex non-linear function $g$ defined on $\cX$ such that
\begin{align}\label{EQ:complex_estimation}
g(\bz) = T\circ\Phi_{\C}(\bz) = \langle \Phi_{\C}(\bz), w\rangle_{\HH} + \langle \Phi_{\C}^*(\bz), v \rangle_{\HH} + c,
\end{align}
for some $w,v\in\HH$, $c\in\C$, which satisfies the aforementioned properties. We formulate the complex support vector regression task as follows:
{\small
\begin{align}\label{EQ:complex_primal}
\begin{matrix}
 \displaystyle{\min_{w,v,c}} & \begin{matrix}\frac{1}{2}\|w\|^2_{\HH} + \frac{1}{2}\|v\|^2_{\HH}
                                                + \frac{C}{N}\displaystyle{\sum_{n=1}^N} (\xi^r_n + \hat\xi^r_n + \xi^i_n + \hat\xi^i_n)\cr
                                                \end{matrix}
                                                \cr
\textrm{s. t.} &
\left\{\begin{matrix}   \real(\langle \Phi_{\C}(\bz_n), w\rangle_{\HH} + \langle \Phi_{\C}(\bz_n), v \rangle_{\HH}+ c - d_n) & \leq & \epsilon + \xi^r_n\cr
                        \real(d_n - \langle \Phi_{\C}(\bz_n), w\rangle_{\HH} -\langle \Phi_{\C}^*(\bz_n), v\rangle_{\HH} - c) & \leq & \epsilon + \hat\xi^r_n \cr
                        \imag(\langle \Phi_{\C}(\bz_n), w\rangle_{\HH} + \langle \Phi_{\C}(\bz_n), v\rangle_{\HH}+ c - d_n) & \leq & \epsilon + \xi^i_n\cr
                        \imag(d_n - \langle \Phi_{\C}(\bz_n), w\rangle_{\HH} -\langle \Phi_{\C}^*(\bz_n), v\rangle_{\HH} - c) & \leq & \epsilon + \hat\xi^i_n \cr
                        \xi^r_n, \hat\xi^r_n, \xi^i_n, \hat\xi^i_n      & \geq & 0
                        \end{matrix}\right.
\end{matrix}
\end{align}
}

To solve (\ref{EQ:complex_primal}), we derive the Lagrangian and the KKT conditions to obtain the dual problem. Thus we take:
{\small
\begin{align}\label{EQ:lagrangian}
\begin{matrix}
\cL = \frac{1}{2}\|w\|^2 + \frac{1}{2}\|v\|^2 + \frac{C}{N}\displaystyle\sum_{n=1}^N (\xi^r_n + \hat\xi^r_n + \xi^i_n + \hat\xi^i_n)\cr
+ \displaystyle\sum_{n=1}^N a_n( \real(\langle \Phi_{\C}(\bz_n), w\rangle_{\HH} + \langle \Phi_{\C}(\bz_n), v \rangle_{\HH} + c - d_n) - \epsilon - \xi^r_n )\cr
+ \displaystyle\sum_{n=1}^N \hat a_n ( \real(d_n - \langle \Phi_{\C}(\bz_n), w\rangle_{\HH} -\langle \Phi_{\C}^*(\bz_n), v\rangle_{\HH} - c) - \epsilon - \hat\xi^r_n )\cr
+ \displaystyle\sum_{n=1}^N b_n ( \imag(\langle \Phi_{\C}(\bz_n), w\rangle_{\HH} + \langle \Phi_{\C}(\bz_n), v\rangle_{\HH} + c - d_n) - \epsilon - \xi^i_n )\cr
+ \displaystyle\sum_{n=1}^N \hat b_n ( \imag(d_n - \langle \Phi_{\C}(\bz_n), w\rangle_{\HH} -\langle \Phi_{\C}^*(\bz_n), v\rangle_{\HH} - c) - \epsilon + \hat\xi^i_n )\cr
- \displaystyle\sum_{n=1}^N \eta_n\xi^r_n - \displaystyle\sum_{n=1}^N \hat\eta_n\hat\xi^r_n - \displaystyle\sum_{n=1}^N \theta_n\xi^i_n - \displaystyle\sum_{n=1}^N \hat\theta_n\hat\xi^i_n,
\end{matrix}
\end{align}
}

\noindent where $a_n$, $\hat a_n$, $b_n$, $\hat b_n$, $\eta_n$, $\hat\eta_n$, $\theta_n$, $\hat\theta_n$ are the Lagrange multipliers. To exploit the saddle point conditions, we employ the rules of Wirtinger calculus for the complex variables on complex RKHS as described in \cite{Bouboulis_2011_10643} and deduce that
{\small
\begin{align*}
\frac{\partial\cL}{\partial w^*} =& \frac{1}{2}w + \frac{1}{2}\displaystyle\sum_{n=1}^N a_n\Phi_{\C}(\bz_n) - \frac{1}{2}\displaystyle\sum_{n=1}^N \hat a_n\Phi_{\C}(\bz_n)\\
& - \frac{\ii}{2}\displaystyle\sum_{n=1}^N b_n\Phi_{\C}(\bz_n) + \frac{\ii}{2}\displaystyle\sum_{n=1}^N \hat b_n\Phi_{\C}(\bz_n),
\end{align*}

\begin{align*}
\frac{\partial\cL}{\partial v^*} =& \frac{1}{2}v + \frac{1}{2}\displaystyle\sum_{n=1}^N a_n\Phi_{\C}^*(\bz_n) - \frac{1}{2}\displaystyle\sum_{n=1}^N \hat a_n\Phi_{\C}^*(\bz_n)\\
& - \frac{\ii}{2}\displaystyle\sum_{n=1}^N b_n\Phi_{\C}^*(\bz_n) + \frac{\ii}{2}\displaystyle\sum_{n=1}^N \hat b_n\Phi_{\C}^*(\bz_n),
\end{align*}

\begin{align*}
\frac{\partial\cL}{\partial c^*} =& \frac{1}{2}\displaystyle\sum_{n=1}^N a_n - \frac{1}{2}\displaystyle\sum_{n=1}^N \hat a_n
+ \frac{\ii}{2}\displaystyle\sum_{n=1}^N b_n - \frac{\ii}{2}\displaystyle\sum_{n=1}^N \hat b_n.
\end{align*}
}

\noindent For the real variables we compute the gradients in the traditional way:
\begin{align*}
\begin{matrix}
\frac{\partial\cL}{\partial \xi^r_n} = \frac{C}{N} - a_n - \eta_n,& \frac{\partial\cL}{\partial \hat\xi^r_n} = \frac{C}{N} - \hat a_n - \hat\eta_n,\cr
\frac{\partial\cL}{\partial \xi^i_n} = \frac{C}{N} - b_n - \theta_n, & \frac{\partial\cL}{\partial \hat\xi^i_n} = \frac{C}{N} - \hat b_n - \theta_n.
\end{matrix}
\end{align*}
for all $n=1,\dots,N$.

As all gradients have to vanish for the saddle point conditions, we finally take that
{\small
\begin{align}
w =& \displaystyle\sum_{n=1}^N(\hat a_n - a_n)\Phi_{\C}(\bz_n) - \ii\displaystyle\sum_{n=1}^N(\hat b_n - b_n)\Phi_{\C}(\bz_n),\label{EQ:SP1}\\
v =& \displaystyle\sum_{n=1}^N(\hat a_n - a_n)\Phi_{\C}^*(\bz_n) - \ii\displaystyle\sum_{n=1}^N(\hat b_n - b_n)\Phi_{\C}^*(\bz_n),\label{EQ:SP2}
\end{align}

\begin{align}
\displaystyle\sum_{n=1}^N (\hat a_n - a_n) = \displaystyle\sum_{n=1}^N (\hat b_n - b_n) = 0,\label{EQ:SP3}
\end{align}

\begin{align}
\begin{matrix}
\eta_n = \frac{C}{N} - a_n, & \hat \eta_n = \frac{C}{N} - \hat a_n,\\
\theta_n = \frac{C}{N} - b_n, & \hat \theta_n = \frac{C}{N} - \hat b_n,
\end{matrix}\label{EQ:SP4}
\end{align}
}
for $n=1,\dots,N$.

To compute $\|w\|_{\HH}^2=\langle w, w\rangle_{\HH}$, we apply equation (\ref{EQ:SP1}), Lemma \ref{LEM:imaginary_part_of_complex_kernel}, the reproducing property of $\HH$, i.e., $\langle \Phi(\bz_n), \Phi(\bz_m)\rangle_{\HH} = \kappa_{\C}(\bz_m,\bz_n)$, and the sesqui-linear property of the inner product of $\HH$ to obtain that:
\begin{align*}
\|w\|_{\HH}^2 =& \displaystyle\sum_{n,m=1}^N (\hat a_n - a_n)(\hat a_m - a_m)\kappa^r_{\C}(\bz_m,\bz_n)\\
&+ \displaystyle\sum_{n,m=1}^N (\hat b_n - b_n)(\hat b_m - b_m)\kappa^r_{\C}(\bz_m,\bz_n)\\
& + 2\displaystyle\sum_{n,m=1}^N (\hat a_n - a_n)(\hat b_m - b_m)\kappa^i_{\C}(\bz_m,\bz_n).
\end{align*}
Similarly, we have
\begin{align*}
\|v\|_{\HH}^2 =& \displaystyle\sum_{n,m=1}^N (\hat a_n - a_n)(\hat a_m - a_m)\kappa^r_{\C}(\bz_m,\bz_n)\\
&+ \displaystyle\sum_{n,m=1}^N (\hat b_n - b_n)(\hat b_m - b_m)\kappa^r_{\C}(\bz_m,\bz_n)\\
& - 2\displaystyle\sum_{n,m=1}^N (\hat a_n - a_n)(\hat b_m - b_m)\kappa^i_{\C}(\bz_m,\bz_n),
\end{align*}
and
\begin{align*}
\frac{\langle \Phi_{\C}(\bz_n), w\rangle_{\HH} + \langle \Phi_{\C}^*(\bz_n), v\rangle_{\HH}}{2} &= \displaystyle\sum_{m=1}^N (\hat a_m - a_m) \kappa^r_{\C}(\bz_m,\bz_n)\\
&+ \ii \displaystyle\sum_{m=1}^N (\hat b_m - b_m) \kappa^r_{\C}(\bz_m,\bz_n).
\end{align*}

Eliminating $\eta_n$, $\hat \eta_n$, $\theta_n$, $\hat\theta_n$ via (\ref{EQ:SP4}) and $w, v$ via the aforementioned relations,
we obtain the final form of the Lagrangian:
{\small
\begin{align}
\begin{matrix}
\cL =& -\displaystyle\sum_{n,m=1}^N (\hat a_n - a_n)(\hat a_m - a_m)\kappa^r_{\C}(\bz_m,\bz_n)\\
&-\displaystyle\sum_{n,m=1}^N (\hat b_n - b_n)(\hat b_m - b_m)\kappa^r_{\C}(\bz_m,\bz_n)\\
&-\epsilon \displaystyle\sum_{n=1}^N (a_n+\hat a_n+b_n+\hat b_n)\\
&+\displaystyle \sum_{n=1}^N d^r_n(\hat a_n-a_n) + \displaystyle \sum_{n=1}^Nd^i_n(\hat b_n-b_n),
\end{matrix}
\end{align}
}

\noindent where $d^r_n$, $d^i_n$ are the real and imaginary parts of the output $d_n$, $n=1,\dots,N$.
This means that we can split the dual problem into two separate maximization tasks:
\begin{subequations}
{\small
\begin{align}\label{EQ:complex_dual1}
\begin{matrix}
\displaystyle{\maxim_{\ba, \hat\ba}} & \left\{\begin{matrix}  -\displaystyle{\sum_{n,m=1}^N}(\hat a_n - a_n)(\hat a_m - a_m)\kappa^r_{\C}(\bz_m,\bz_n)\cr
                                                       -\epsilon\displaystyle{\sum_{n=1}^N}(\hat a_n + a_n) + \displaystyle{\sum_{n=1}^N} d^r_n(\hat a_n - a_n) \end{matrix}\right.\cr
\textrm{subject to} & \displaystyle{\sum_{n=1}^N}(\hat a_n - a_n)=0 \textrm{ and } a_n, \hat a_n\in[0,C/N],
\end{matrix}
\end{align}
}

\noindent and

{\small
\begin{align}\label{EQ:complex_dual2}
\begin{matrix}
\displaystyle{\maxim_{\bb, \hat\bb}} & \left\{\begin{matrix}  -\displaystyle{\sum_{n,m=1}^N}(\hat b_n - b_n)(\hat b_m - b_m)\kappa^r_{\C}(\bz_m,\bz_n)\cr
                                                       -\epsilon\displaystyle{\sum_{n=1}^N}(\hat b_n + b_n) + \displaystyle{\sum_{n=1}^N} d^i_n(\hat b_n - b_n) \end{matrix}\right.\cr
\textrm{subject to} & \displaystyle{\sum_{n=1}^N}(\hat b_n - b_n)=0 \textrm{ and } b_n, \hat b_n\in[0,C/N].
\end{matrix}
\end{align}
}
\end{subequations}
Observe that (\ref{EQ:complex_dual1}) and (\ref{EQ:complex_dual2}) are equivalent with the dual problem of a standard real support vector regression task with kernel $2\kappa^r_{\C}$. This is a real kernel, as Lemma \ref{LEM:induced_kernel} establishes. Therefore (Figure \ref{FIG:sketch}), one may solve the two real SVR tasks for $a_n$, $\hat a_n$, $c^r$ and $b_n$, $\hat b_n$, $c^i$, respectively, using any one of the algorithms which have been developed for this purpose, and then combine the two solutions to find the final non-linear solution of the complex problem as
\begin{align}
g(\bz) =& \langle \Phi_{\C}(\bz), w\rangle_{\HH} + \langle \Phi_{\C}^*(\bz), v\rangle_{\HH} + c\nonumber\\
=&\displaystyle 2\sum_{n=1}^N(\hat a_n - a_n)\kappa^r_{\C}(\bz_n,\bz)\label{EQ:solution_complex}\\
&+ 2\ii\sum_{n=1}^N(\hat b_n - b_n)\kappa^r_{\C}(\bz_n,\bz) + c.\nonumber
\end{align}
In this paper we are focusing mainly on the complex Gaussian kernel. It is important to emphasize that, in this case, the induced kernel $\kappa^r_{\C}$ is not the real Gaussian RBF. Figure \ref{FIG:feature_map} shows the element $\kappa^r_{\C}(\cdot,(0,0)^T)$ of the induced real feature space.

\begin{remark}\label{REM:SVR_complexification}
For the complexification procedure, we select a real kernel $\kappa_{\R}$ and transform the input data from $\cX$ to the complexified space $\HH$, via the feature map $\bar\Phi_{\C}$, to obtain the data $\{(\bar\Phi_{\C}(\bz_n), d_n);\;n=1,\dots,N\}$. Following a similar procedure as the one described above and considering that $$\langle\bar\Phi_{\C}(\bz_n), \bar\Phi_{\C}(\bz_m)\rangle_{\HH} = 2\kappa_{\R}(\bz_m, \bz_n)$$
we can easily deduce that the dual of the complexified SVR task is equivalent to two real SVR tasks employing the kernel $2\kappa_{\R}$. Hence, the complexification technique is identical to the DRC approach.
\end{remark}

\section{Complex support vector machine}\label{SEC:CSVM}

\subsection{Complex hyperplanes}
Recall that in any real Hilbert space $\cH$, a hyperplane consists of all the elements $f\in\cH$ that satisfy
\begin{align}\label{EQ:hyperplane_real}
\langle f, w\rangle_{\cH} + c =0,
\end{align}
for some $w\in\cH$, $c\in\R$. Moreover, as Figure \ref{FIG:hyperplane_R} shows, any hyperplane of $\cH$ divides the space into two parts, $\cH_+ = \{f\in\cH;\; \langle f, w\rangle_{\cH} + c > 0\}$ and $\cH_- = \{f\in\cH;\; \langle f, w\rangle_{\cH} + c < 0\}$. In the traditional SVM classification task the goal is to separate two distinct classes of data by a maximum margin hyperplane, so that one class falls into $\cH_+$ and the other into $\cH_-$ (excluding some outliers). In order to be able to generalize the SVM rationale to complex spaces, firstly, we need to determine an appropriate definition for a complex hyperplane. The difficulty is that the set of complex numbers is not an ordered one, and thus one may not assume that a complex version of (\ref{EQ:hyperplane_real}) divides the space into two parts, as $\cH_+$ and $\cH_-$ cannot be defined. Instead, we will provide a novel definition of complex hyperplanes that divide the complex space into four parts. This will be our kick off point for deriving the complex SVM rationale, which classifies objects into four (instead of two) classes.

\begin{figure}[t]
\begin{center}
\includegraphics[scale=1.5]{./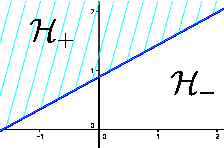}
\end{center}
\caption{A hyperplane separates the space $\cH$ into two parts, $\cH_+$ and $\cH_-$.}\label{FIG:hyperplane_R}
\end{figure}

\begin{lemma}\label{LEM:hyperplane_complex}
The relations
\begin{subequations}\label{EQ:hyperplane_complex}
\begin{align}
\real\left(\langle f, w\rangle_{\HH} + c\right) =0, \label{EQ:hyperplane_complex1}\\
\imag\left(\langle f, w\rangle_{\HH} + c\right) =0, \label{EQ:hyperplane_complex2}
\end{align}
\end{subequations}
for some $w\in\HH$, $c\in\C$, where $f\in\HH$, represent two orthogonal hyperplanes of the doubled real space, i.e., $\cH^2$, in general positions.
\end{lemma}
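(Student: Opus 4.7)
The plan is to identify $\HH$ with the doubled real space $\cH^2$ via the map $f = f^r + \ii f^i \mapsto \bbf = (f^r, f^i)$, use the formula (\ref{EQ:complex_inner}) for the complex inner product to expand $\langle f, w\rangle_{\HH}+c$ into its real and imaginary parts, and then recognize each equation as an affine equation on $\cH^2$ with respect to a specific normal vector.

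First I would write $w = w^r + \ii w^i$ and $c = c^r + \ii c^i$, and substitute into (\ref{EQ:complex_inner}) to obtain
\begin{align*}
\real(\langle f, w\rangle_{\HH} + c) &= \langle f^r, w^r\rangle_{\cH} + \langle f^i, w^i\rangle_{\cH} + c^r,\\
\imag(\langle f, w\rangle_{\HH} + c) &= \langle f^i, w^r\rangle_{\cH} - \langle f^r, w^i\rangle_{\cH} + c^i.
\end{align*}
Defining the two elements of $\cH^2$
\begin{align*}
\bw_1 = (w^r, w^i), \qquad \bw_2 = (-w^i, w^r),
\end{align*}
the equations (\ref{EQ:hyperplane_complex1}) and (\ref{EQ:hyperplane_complex2}) become, respectively,
\begin{align*}
\langle \bbf, \bw_1\rangle_{\cH^2} + c^r = 0, \qquad \langle \bbf, \bw_2\rangle_{\cH^2} + c^i = 0,
\end{align*}
which are precisely the defining equations of two affine hyperplanes in $\cH^2$.

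To establish orthogonality, I would compute the inner product of the two normal vectors in $\cH^2$,
\begin{align*}
\langle \bw_1, \bw_2\rangle_{\cH^2} = \langle w^r, -w^i\rangle_{\cH} + \langle w^i, w^r\rangle_{\cH} = 0,
\end{align*}
using the symmetry of the real inner product. This shows the two hyperplanes are orthogonal. For the ``general position'' claim, I would note that $\bw_1$ and $\bw_2$ have the same norm $\|w\|_{\HH}/\sqrt{2}$ and, being orthogonal, are linearly independent whenever $w\neq 0$; hence the two hyperplanes are not parallel and intersect in an affine subspace of codimension two.

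I do not anticipate serious obstacles here: the only real content is the bookkeeping translation from the complex inner product (\ref{EQ:complex_inner}) to the real inner product of $\cH^2$, and the observation that the off-diagonal swap $(w^r, w^i) \mapsto (-w^i, w^r)$ is an isometric, orthogonality-preserving rotation on $\cH^2$. The mild subtlety is simply to be careful that the imaginary part of $\langle f, w\rangle_{\HH}$ is \emph{not} an inner product of $\bbf$ with $(w^i, -w^r)$ but rather with $(-w^i, w^r)$; a sign check using (\ref{EQ:complex_inner}) settles this.
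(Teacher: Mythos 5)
Your proof is correct and follows essentially the same route as the paper: expand $\langle f, w\rangle_{\HH}$ via (\ref{EQ:complex_inner}), read off the two equations as affine hyperplanes in $\cH^2$ with normals $(w^r,w^i)$ and $(-w^i,w^r)$, and verify orthogonality of these normals (your treatment of the ``general position'' point is, if anything, slightly more explicit than the paper's). One incidental slip that does not affect the argument: the common norm of the two normals is $\|w\|_{\HH}$, not $\|w\|_{\HH}/\sqrt{2}$, since $\|w\|_{\HH}^2=\|w^r\|_{\cH}^2+\|w^i\|_{\cH}^2$.
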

\begin{proof}
Observe that
\begin{align*}
\langle f, w\rangle_{\HH} &= \langle f^r, w^r\rangle_{\cH} + \langle f^i, w^i\rangle_{\cH} +\ii (\langle f^i, w^r\rangle_{\cH} - \langle f^r, w^i\rangle_{\cH}),
\end{align*}
where $f=f^r+\ii f^i$, $w=w^r+\ii w^i$.
Hence, we take that
\begin{align*}
\left\langle \left(\begin{matrix}f^r \cr f^i\end{matrix}\right), \left(\begin{matrix}w^r \cr w^i\end{matrix}\right)\right\rangle_{\cH^2} + c^r = 0
\end{align*}
and
\begin{align*}
\left\langle \left(\begin{matrix}f^r \cr f^i\end{matrix}\right), \left(\begin{matrix}-w^i \cr w^r\end{matrix}\right)\right\rangle_{\cH^2} + c^i = 0,
\end{align*}
where $c=c^r+\ii c^i$. These are two distinct hyperplanes of $\cH^2$.
Moreover, as
\begin{align*}
\left(\begin{matrix}-w^i & w^r\end{matrix}\right) \left(\begin{matrix}w^r \cr w^i\end{matrix}\right) = 0,
\end{align*}
the two hyperplanes are orthogonal.
\end{proof}

\begin{lemma}\label{LEM:hyperplane_wl_complex}
The relations
\begin{subequations} \label{EQ:hyperplane_wl_complex}
\begin{align}
\real\left(\langle f, w\rangle_{\HH} + \langle f^*, v\rangle_{\HH} + c\right) =0, \label{EQ:hyperplane_wl_complex1}\\
\imag\left(\langle f, w\rangle_{\HH} + \langle f^*, v\rangle_{\HH} + c\right) =0, \label{EQ:hyperplane_wl_complex2}
\end{align}
\end{subequations}
for some $w,v\in\HH$, $c\in\C$, where $f\in\HH$, represent two hyperplanes of the doubled real space, i.e., $\cH^2$. Depending on the values of $w,v$, these hyperplanes may be placed arbitrarily on $\cH^2$.
\end{lemma}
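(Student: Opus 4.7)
The plan is to mirror the strategy of Lemma \ref{LEM:hyperplane_complex}, but now also to expand the widely linear term $\langle f^*, v\rangle_{\HH}$ and then argue that the extra free parameters coming from $v$ allow the two resulting hyperplanes in $\cH^2$ to have arbitrary (rather than forcibly orthogonal) normal directions.

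First, writing $f=f^r+\ii f^i$ and $v=v^r+\ii v^i$, I would note that $f^*=f^r-\ii f^i$, and then compute, using the definition (\ref{EQ:complex_inner}) of the complex inner product,
\begin{align*}
\langle f^*, v\rangle_{\HH} = \langle f^r, v^r\rangle_{\cH} - \langle f^i, v^i\rangle_{\cH} - \ii\bigl(\langle f^i, v^r\rangle_{\cH} + \langle f^r, v^i\rangle_{\cH}\bigr).
\end{align*}
Combining this with the expansion of $\langle f, w\rangle_{\HH}$ already obtained in Lemma \ref{LEM:hyperplane_complex}, I would read off the real and imaginary parts of $\langle f, w\rangle_{\HH}+\langle f^*, v\rangle_{\HH}+c$ as
\begin{align*}
\left\langle \left(\begin{matrix}f^r\cr f^i\end{matrix}\right),\left(\begin{matrix}w^r+v^r\cr w^i-v^i\end{matrix}\right)\right\rangle_{\cH^2}+c^r,\qquad
\left\langle \left(\begin{matrix}f^r\cr f^i\end{matrix}\right),\left(\begin{matrix}-w^i-v^i\cr w^r-v^r\end{matrix}\right)\right\rangle_{\cH^2}+c^i,
\end{align*}
so that setting each to zero gives two hyperplanes of $\cH^2$. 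This is the routine algebraic step and the main obstacle is not here.

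The substantive part is proving the ``arbitrary placement'' claim. I would fix any two desired hyperplanes in $\cH^2$, specified by normal vectors $(p^r,p^i),(q^r,q^i)\in\cH^2$ and offsets $\alpha,\beta\in\R$, and then solve the system
\begin{align*}
w^r+v^r = p^r,\quad w^i-v^i = p^i,\quad -w^i-v^i = q^r,\quad w^r-v^r = q^i,
\end{align*}
together with $c^r=\alpha$, $c^i=\beta$. This linear system is explicitly solvable: $w^r=(p^r+q^i)/2$, $v^r=(p^r-q^i)/2$, $w^i=(p^i-q^r)/2$, $v^i=-(p^i+q^r)/2$, showing surjectivity of the parametrization and hence that the two hyperplanes can indeed be placed arbitrarily in $\cH^2$. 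A final remark would contrast this with Lemma \ref{LEM:hyperplane_complex}, where the absence of the conjugate term $\langle f^*,v\rangle_{\HH}$ forces the second normal to be the rotated version $(-w^i,w^r)$ of the first, yielding orthogonality; the widely linear term breaks exactly this constraint.

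The main obstacle I anticipate is simply being careful with signs when expanding $\langle f^*,v\rangle_{\HH}$ (since complex conjugation interacts with the sesqui-linear, Hermitian inner product), and then verifying that the four-variable linear system above really is full rank in the pair of normals, which amounts to the $4\times 4$ change of variables being invertible. Once those bookkeeping steps are handled, the lemma follows immediately.
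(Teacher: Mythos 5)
Your proposal is correct and follows essentially the same route as the paper: expand $\langle f,w\rangle_{\HH}$ and $\langle f^*,v\rangle_{\HH}$ via the real/imaginary decomposition of the complex inner product and read off the two hyperplanes of $\cH^2$ with normals $\left(\begin{matrix}w^r+v^r\cr w^i-v^i\end{matrix}\right)$ and $\left(\begin{matrix}-(w^i+v^i)\cr w^r-v^r\end{matrix}\right)$, exactly as in the paper's (very terse) proof. The one point where you go beyond the paper is the explicit inversion $w^r=(p^r+q^i)/2$, $v^r=(p^r-q^i)/2$, $w^i=(p^i-q^r)/2$, $v^i=-(p^i+q^r)/2$, which actually substantiates the ``arbitrary placement'' claim that the paper only asserts; your signs and the solvability of the $4\times 4$ system check out, so this is a welcome strengthening rather than a deviation.
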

\begin{proof}
Following a similar rationale as in the proof of Lemma \ref{LEM:hyperplane_complex}, we take
\begin{align*}
\left\langle \left(\begin{matrix}f^r \cr f^i\end{matrix}\right), \left(\begin{matrix}w^r + v^r \cr w^i-v^i\end{matrix}\right)\right\rangle_{\cH^2} + c^r = 0
\end{align*}
and
\begin{align*}
\left\langle \left(\begin{matrix}f^r \cr f^i\end{matrix}\right), \left(\begin{matrix}-(w^i +v^i) \cr w^r-v^r\end{matrix}\right)\right\rangle_{\cH^2} + c^i = 0,
\end{align*}
where $f=f^r+\ii f^i$, $w=w^r+\ii w^i$, $v=v^r+\ii v^i$, $c=c^r+\ii c^i$.
\end{proof}

The following Definition comes naturally.

\begin{definition}\label{DEF:hyperplanes_wl}
Let $\HH$ be a complex Hilbert space. We define the complex pair of hyperplanes as the set of all $f\in\HH$ that satisfy one of the following relations
\begin{subequations}\label{EQ:hyperplanes_wl}
\begin{align}
\real\left(\langle f, w\rangle_{\HH} + \langle f^*, v\rangle_{\HH} + c\right) =0, \\
\imag\left(\langle f, w\rangle_{\HH} + \langle f^*, v\rangle_{\HH} + c\right) =0,
\end{align}
\end{subequations}
for some $w,v\in\HH$, $c\in\C$.
\end{definition}

Lemmas \ref{LEM:hyperplane_complex} and \ref{LEM:hyperplane_wl_complex} demonstrate the significant difference between complex linear estimation and widely linear estimation functions, which has been, already, pointed out in Section \ref{SEC:complex_SVR}, albeit in a different context. The complex linear case is quite restrictive, as the couple of complex hyperplanes are always orthogonal. On the other hand, the widely linear case is more general and covers all cases. The complex pair of hyperplanes (as defined by definition  \ref{DEF:hyperplanes_wl}) divides the space into four parts, i.e.,
\begin{align*}
\cH_{++} = \left\{f\in\cH;\; \begin{matrix}\real\left(\langle f, w\rangle_{\HH} + \langle f^*, v\rangle_{\HH} + c\right) > 0, \cr \imag\left(\langle f, w\rangle_{\HH} + \langle f^*, v\rangle_{\HH} + c\right) > 0\end{matrix}\right\},\\
\cH_{+-} = \left\{f\in\cH;\; \begin{matrix}\real\left(\langle f, w\rangle_{\HH} + \langle f^*, v\rangle_{\HH} + c\right) > 0, \cr \imag\left(\langle f, w\rangle_{\HH} + \langle f^*, v\rangle_{\HH} + c\right) < 0\end{matrix}\right\},\\
\cH_{-+} = \left\{f\in\cH;\; \begin{matrix}\real\left(\langle f, w\rangle_{\HH} + \langle f^*, v\rangle_{\HH} + c\right) < 0, \cr \imag\left(\langle f, w\rangle_{\HH} + \langle f^*, v\rangle_{\HH} + c\right) > 0\end{matrix}\right\},\\
\cH_{--} = \left\{f\in\cH;\; \begin{matrix}\real\left(\langle f, w\rangle_{\HH} + \langle f^*, v\rangle_{\HH} + c\right) < 0, \cr \imag\left(\langle f, w\rangle_{\HH} + \langle f^*, v\rangle_{\HH} + c\right) < 0\end{matrix}\right\}.
\end{align*}
Figure \ref{FIG:hyperplane_C} demonstrates a simple case of a complex pair of hyperplanes that divides $\C$ into four parts.
Note that in some cases the complex pair of hyperplanes might degenerate into two identical hyperplanes or two parallel hyperplanes.

\begin{figure}[t]
\begin{center}
\includegraphics[scale=0.07]{./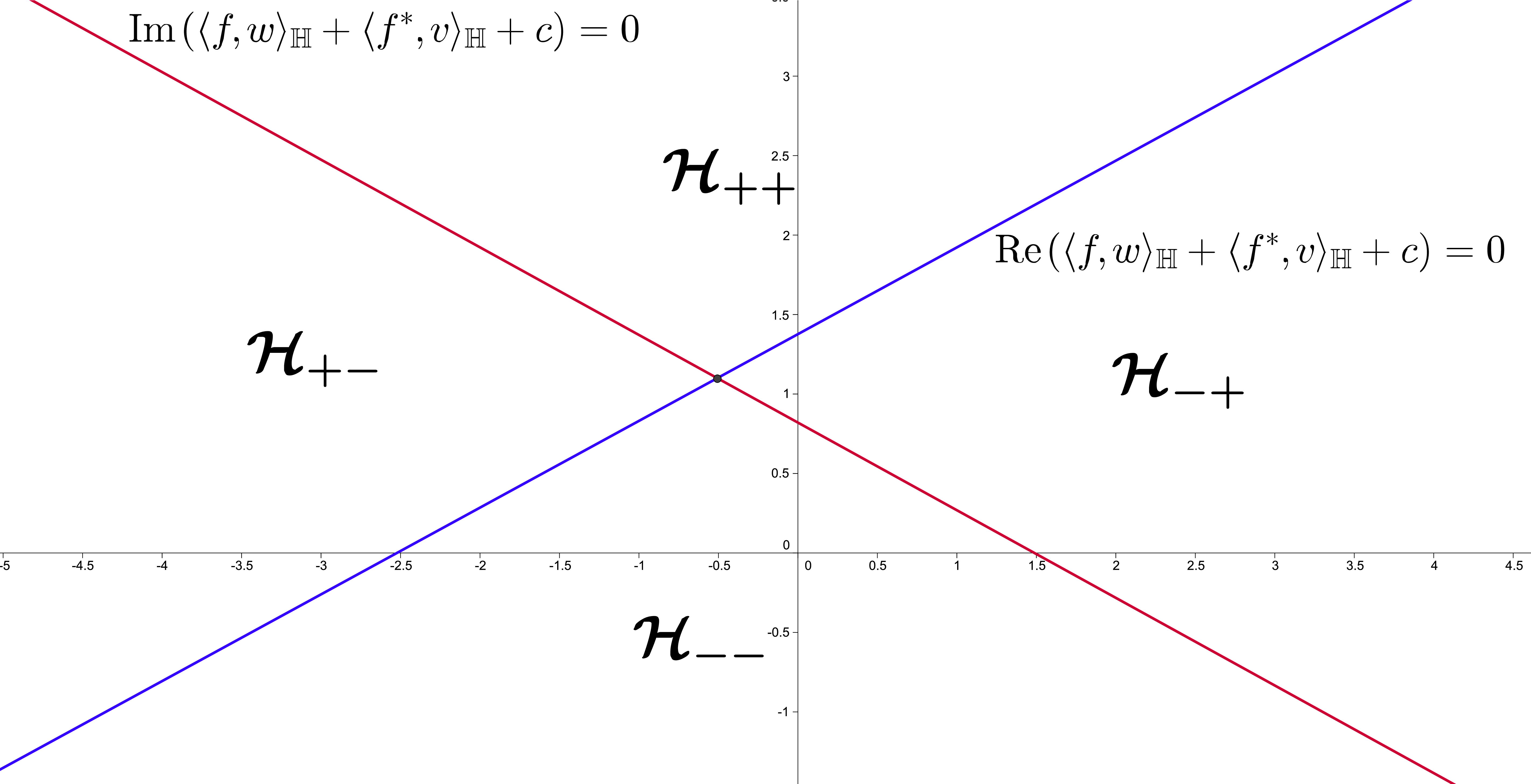}
\end{center}
\caption{A complex pair of hyperplanes separates the space of complex numbers (i.e., $\HH=\C$) into four parts.}\label{FIG:hyperplane_C}
\end{figure}

\subsection{The quaternary complex SVM}
The complex SVM classification task can be formulated as follows. Suppose we are given training data, which belong to four separate classes $C_{++}, C_{+-}, C_{-+}, C_{--}$, i.e., $\{(\bz_n, d_n);\;n=1,\dots,N\}\subset\cX\times\{\pm 1 \pm \ii)\}$. If $d_n=+1 + \ii$, then the $n$-th sample belongs to $C_{++}$, i.e., $\bz_n\in C_{++}$, if $d_n=1 - \ii$, then $\bz_n\in C_{+-}$, if $d_n=-1 + \ii$, then $\bz_n\in C_{-+}$ and if $d_n=-1 - \ii$, then $\bz_n\in C_{--}$. Consider the complex RKHS $\HH$ with respective kernel $\kappa_{\C}$. Following a similar rationale to the real case, we transform the input data from $\cX$ to $\HH$, via the feature map $\Phi_{\C}$. The goal of the SVM task is to estimate a complex pair of maximum margin hyperplanes that separates the points of the four classes (see Figure \ref{FIG:classes4}). Thus, we need to minimize

\begin{flalign*}
\left\|\left(\begin{matrix}w^r + v^r \cr w^i-v^i\end{matrix}\right)\right\|_{\cH^2}^2 + \left\|\left(\begin{matrix}-(w^i +v^i) \cr w^r-v^r\end{matrix}\right)\right\|_{\cH^2}^2 &= \\
\|w^r + v^r\|^2_{\cH} + \|w^i-v^i\|^2_{\cH} + \|(w^i +v^i)\|^2_{\cH} + \|w^r-v^r\|^2_{\cH} &= \\
2\|w^r\|_{\cH}^2 + 2\|w^i\|_{\cH}^2 + 2\|v^r\|_{\cH}^2 + 2\|v^i\|_{\cH}^2 &= \\
2(\|w\|_{\HH}^2 + \|v\|^2_{\HH})&.
\end{flalign*}

\begin{figure}[t]
\begin{center}
\includegraphics[scale=0.4]{./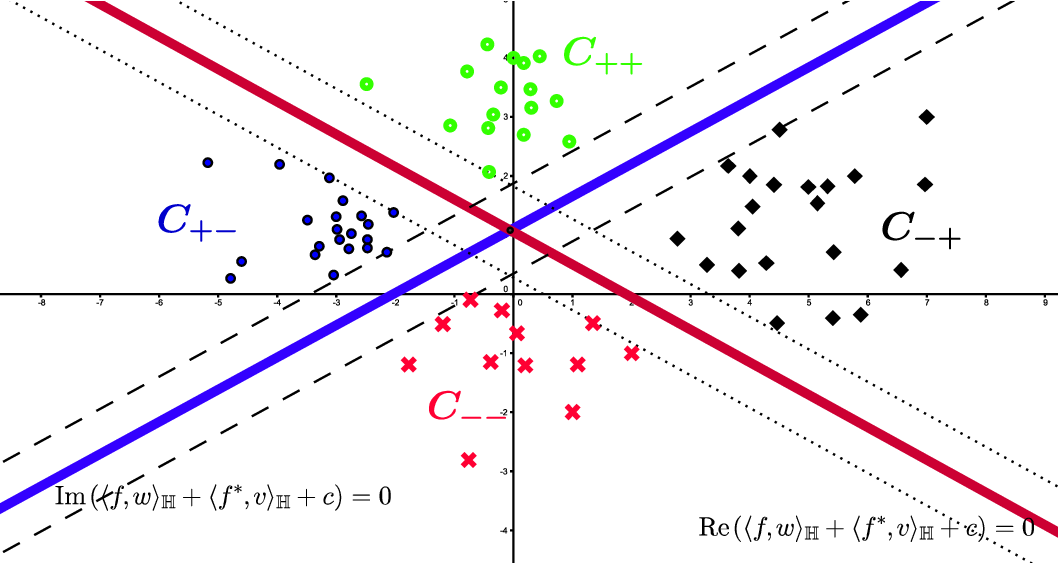}
\end{center}
\caption{A complex pair of hyperplanes that separates the four given classes. The hyperplanes are chosen so that to maximize the margin between the classes.}\label{FIG:classes4}
\end{figure}

Therefore, the primal complex SVM optimization problem can be formulated as
{\small
\begin{align}\label{EQ:SVM_complex_primal}
\begin{matrix}
\displaystyle{\min_{w,v,c}} & \frac{1}{2}\|w\|^2_{\HH} + \frac{1}{2}\|v\|^2_{\HH} + \frac{C}{N}\displaystyle{\sum_{n=1}^N}(\xi_n^r + \xi_n^i)\cr
\textrm{s. to} &
\left\{\begin{matrix} d^r_n\real\left(\langle \Phi_{\C}(\bz_n), w\rangle_{\HH} + \langle \Phi^*_{\C}(\bz_n), v\rangle_{\HH} + c\right) & \geq & 1 - \xi_n^r\cr
                      d^i_n\imag\left(\langle \Phi_{\C}(\bz_n), w\rangle_{\HH} + \langle \Phi^*_{\C}(\bz_n), w\rangle_{\HH} + c\right) & \geq & 1 - \xi_n^i\cr
                      \xi_n^r,\xi_n^i                               & \geq & 0
                      \end{matrix}\right.\cr
& \textrm{for } n=1,\dots,N.
\end{matrix}
\end{align}
}
The Lagrangian function becomes
{\small
\begin{align*}
L&(w,v,\ba,\bb) = \frac{1}{2}\|w\|^2_{\HH} + \frac{1}{2}\|v\|^2_{\HH} + \frac{C}{N}\displaystyle{\sum_{n=1}^N}(\xi_n^r + \xi_n^i)\\
& -\displaystyle{\sum_{n=1}^N} a_n \left(d^r_n\real\left(\langle \Phi_{\C}(\bz_n), w\rangle_{\HH} + \langle \Phi^*_{\C}(\bz_n), v\rangle_{\HH} + c\right) - 1 + \xi_n^r \right)\\
& - \displaystyle{\sum_{n=1}^N} b_n \left(d^i_n\imag\left(\langle \Phi_{\C}(\bz_n), w\rangle_{\HH} + \langle \Phi^*_{\C}(\bz_n), w\rangle_{\HH} + c\right) - 1 + \xi_n^i\right)\\
&- \displaystyle{\sum_{n=1}^N} \eta_n\xi_n^r - \displaystyle{\sum_{n=1}^N}\theta_n\xi_n^i,
\end{align*}
}

\noindent where $a_n, b_n,\eta_n$ and $\theta_n$ are the positive Lagrange multipliers of the respective inequalities, for $n=1,\dots,N$.
To exploit the saddle point conditions of the Lagrangian function, we employ the rules of Wirtinger calculus to compute the respective gradients. Hence, we take
\begin{align*}
\frac{\partial L}{\partial w^*} &= \frac{1}{2}w - \frac{1}{2}\displaystyle\sum_{n=1}^N a_n d_n^r\Phi_{\C}(\bz_n) + \frac{\ii}{2}\displaystyle\sum_{n=1}^N b_n d_n^i\Phi_{\C}(\bz_n)\\
\frac{\partial L}{\partial v^*} &= \frac{1}{2}v - \frac{1}{2}\displaystyle\sum_{n=1}^N a_n d_n^r\Phi^*_{\C}(\bz_n) + \frac{\ii}{2}\displaystyle\sum_{n=1}^N b_n d_n^i\Phi^*_{\C}(\bz_n)\\
\frac{\partial L}{\partial c^*} &= \frac{1}{2}\displaystyle\sum_{n=1}^N a_n d_n^r + \frac{\ii}{2}\displaystyle\sum_{n=1}^N b_n d_n^i
\end{align*}
and
\begin{align*}
\frac{\partial L}{\partial \xi^r_n} = \frac{C}{N} - a_n - \eta_n,\quad \frac{\partial L}{\partial \xi_n^i} = \frac{C}{N} - b_n - \theta_n.
\end{align*}
for $n=1,\dots,N$. As all the gradients have to vanish, we finally take that
\begin{align*}
w &= \displaystyle\sum_{n=1}^N(a_n d_n^r - \ii b_n d_n^i)\Phi_{\C}(\bz_n)\\
v &= \displaystyle\sum_{n=1}^N(a_n d_n^r - \ii b_n d_n^i)\Phi^*_{\C}(\bz_n)\\
\displaystyle\sum_{n=1}^N a_n d_n^r &= \displaystyle\sum_{n=1}^N b_n d_n^i= 0
\end{align*}
and
\begin{align*}
a_n + \eta_n = \frac{C}{N},\quad b_n + \theta_n = \frac{C}{N}
\end{align*}
for $n=1,\dots,N$. Following a similar procedure as in the complex SVR case, it turns out that the dual problem can be split into two separate maximization tasks:
\begin{subequations}
\begin{align}\label{EQ:SVM_complex_dual1}
\begin{matrix}
\displaystyle{\maxim_{\ba}} & \displaystyle{\sum_{n=1}^N}a_n - \displaystyle{\sum_{n,m=1}^N}a_n a_m d_n^r d_m^r\kappa_{\C}^r(\bz_m,\bz_n) \cr
\textrm{subject to} &
\left\{\begin{matrix} \displaystyle\sum_{n=1}^N a_n d_n^r=0 \cr
                      0\leq a_n\leq\frac{C}{N}
                      \end{matrix}\right.\cr
& \textrm{for } n=1,\dots,N
\end{matrix}
\end{align}
and
\begin{align}\label{EQ:SVM_complex_dual2}
\begin{matrix}
\displaystyle{\maxim_{\bb}} & \displaystyle{\sum_{n=1}^N}b_n - \displaystyle{\sum_{n,m=1}^N}b_n b_m d_n^i d_m^i\kappa_{\C}^r(\bz_m,\bz_n) \cr
\textrm{subject to} &
\left\{\begin{matrix} \displaystyle\sum_{n=1}^N b_n d_n^i=0 \cr
                      0\leq b_n\leq\frac{C}{N}
                      \end{matrix}\right.\cr
& \textrm{for } n=1,\dots,N.
\end{matrix}
\end{align}
\end{subequations}

Observe that, similar to the regression case, these problems are equivalent with two distinct real SVM (dual) tasks employing the induced real kernel $2\kappa_{\C}^r$. One may split the (output) data to their real and imaginary parts, as Figure \ref{FIG:sketch_CSVM} demonstrates, solve two real SVM tasks employing any one of the standard algorithms and, finally, combine the solutions to take the complex labeling function:
\begin{align*}
g(\bz) =& \sign_{\ii}\left(\langle \Phi_{\C}(\bz), w\rangle_{\HH} + \langle \Phi^*_{\C}(\bz), v\rangle_{\HH} + c\right)\\
=& \sign_{\ii}\left(2\displaystyle\sum_{n=1}^N(a_n d_n^r+\ii b_n d_n^i)\kappa_{\C}^r(\bz_n,\bz) + c^r + \ii c^i\right),
\end{align*}
where $\sign_{\ii}(z) = \sign(\real(z)) + \ii\sign(\imag(z))$.

\begin{figure*}[t]
\begin{center}
\includegraphics[scale=1]{./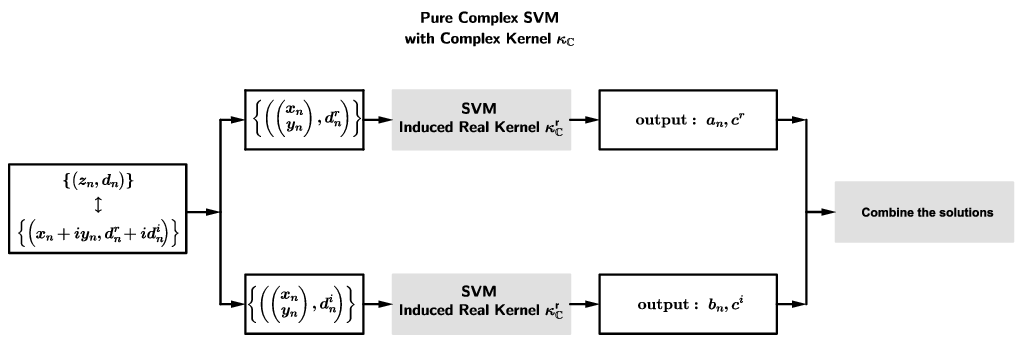}
\end{center}
\caption{Pure Complex support vector machine.}\label{FIG:sketch_CSVM}
\end{figure*}

\begin{remark}\label{REM:SVM_complexification}
Following the complexification procedure, as in Remark \ref{REM:SVR_complexification}, we select a real kernel $\kappa_{\R}$ and transform the input data from $\cX$ to the complexified space $\HH$, via the feature map $\bar\Phi_{\C}$. We can easily deduce that the dual of the complexified SVM task is equivalent to two real SVM tasks employing the kernel $2\kappa_{\R}$.
\end{remark}

\begin{remark}\label{REM:SVM_real_case}
It is evident that both the complex and the complexified SVM can be employed for binary classification as well. The advantage in this case is that one is able to handle complex input data in both scenarios. Moreover, the popular one-versus-one and one-versus-all strategies \cite{Scholkopf_2002_2276, ShaweTaylor_2004_9255}, which address multiclassification problems, can be directly applied to complex inputs using either the complex or the complexified binary SVM.
\end{remark}

\section{Experiments}\label{SEC:EXP}
In order to illuminate the advantages that are gained by the complex kernels and to demonstrate the performance of the proposed algorithmic schemes, we compare it with standard real-valued technics and the dual real channel approach, under various regression and classification scenarios.  In the following, we will refer to the pure complex kernel rationale and the complexification trick, presented in this paper, using the terms CSVR (or CSVM) and complexified SVR (or complexified SVM) respectively. The dual real channel approach, outlined in Section \ref{SEC:DRC}, will be denoted as DRC-SVR. Recall that the DRC approach is equivalent to the complexified rationale, although the latter often provides for more compact formulas and simpler representations. The following experiments were implemented in Matlab. The respective code can be found in \url{bouboulis.mysch.gr/kernels.html}.

\subsection{SVR - Function Estimation}
In this Section, we perform a simple regression test on the complex function $\sinc(z)$. An orthogonal grid of $33\times 9$ actual points of the $sinc$ function, corrupted by noise, was adopted as the training data. Figures \ref{FIG:sinc_real_noisy} and \ref{FIG:sinc_imag_noisy} show the real and imaginary parts of the reconstructed function using the CSVR rationale. Note the excellent visual results obtained by the corrupted training data. Figures \ref{FIG:error_complex}, \ref{FIG:error_DC} and Table \ref{TAB:ESTIMATION} compare the square errors (i.e. $|\hat d_n - \sinc(\bz_n)|^2$, where $\hat d_n$ is the value of the estimated function at $\bz_n$) between the CSVR and the DRC-SVR over 100 different realizations of the experiment. In each realization, the $\sinc$ function was corrupted by white noise of the form $Z=X+\ii Y$, where $X$ and $Y$ are real random variables following the Gaussian distribution with variances $\sigma_1=0.4$ and $\sigma_2=0.3$ respectively. As it is shown in Table \ref{TAB:ESTIMATION}, the DRC-SVR fails to capture the complex structure of the function. On the other hand, the CSVR rationale provides for an estimation function, which exhibits excellent characteristics. A closer look at Figures \ref{FIG:error_complex} and \ref{FIG:error_DC} reveals that at the border of the training grid the square error increases in some cases. This is expected, as the available information (i.e., the neighboring points), which it is exploited by the SVR algorithm, is reduced in these areas compared to the interior points of the grid. Besides the significant decrease in the square error, in these experiments we also observed a significant reduction in the computing time needed for the CSVR compared to the DRC-SVR. In our opinion, this enhanced performance (both in terms of MSE and computational time) is due to the special structure of the $\sinc$ function. Recall that the $\sinc$ is a complex analytic function, hence it is more natural to use complex analytic functions (e.g., the complex Gaussian kernel function), instead of real analytic functions (e.g., the real Gaussian kernel), to estimate its shape.  Both algorithms were implemented in MatLab on a computer with a Core i5 650 microprocessor running at 3.2 GHz.

\begin{table}
\begin{center}
\begin{tabular}{|c|c|c|}
\hline
 & CSVR & DRC-SVR\\\hline
Mean MSE (dB) & -15.75 dB & -10.42 dB\\\hline
Mean number of Support Vectors & 282 & 282\\\hline
Mean Time & 179 secs & 430 secs\\\hline
\end{tabular}
\end{center}
\caption{The mean square errors, the number of support vectors and the computing time over 100 realizations of the $\sinc$ estimation experiment.}\label{TAB:ESTIMATION}
\end{table}

In all the performed experiments, the SMO algorithm was employed using the complex Gaussian kernel and the real Gaussian kernel for the CSVR and the DRC-SVR, respectively (see \cite{Platt_1998_9325}). The parameters of the kernel for both the complex SVR and the DRC-SVR tasks were tuned (using cross-correlation) to provide the smallest mean square error. In particular for the CSVR, the parameter of the complex Gaussian kernel was set to $t=0.3$, while for the DRC-SVR the parameter was set to $t=2$. In both cases the parameters of the SVR task were set as $C=1000$, $\epsilon=0.1$.


\begin{figure*}
\centering
\begin{minipage}[b]{.4\textwidth}
\begin{center}
\includegraphics[scale=0.4]{./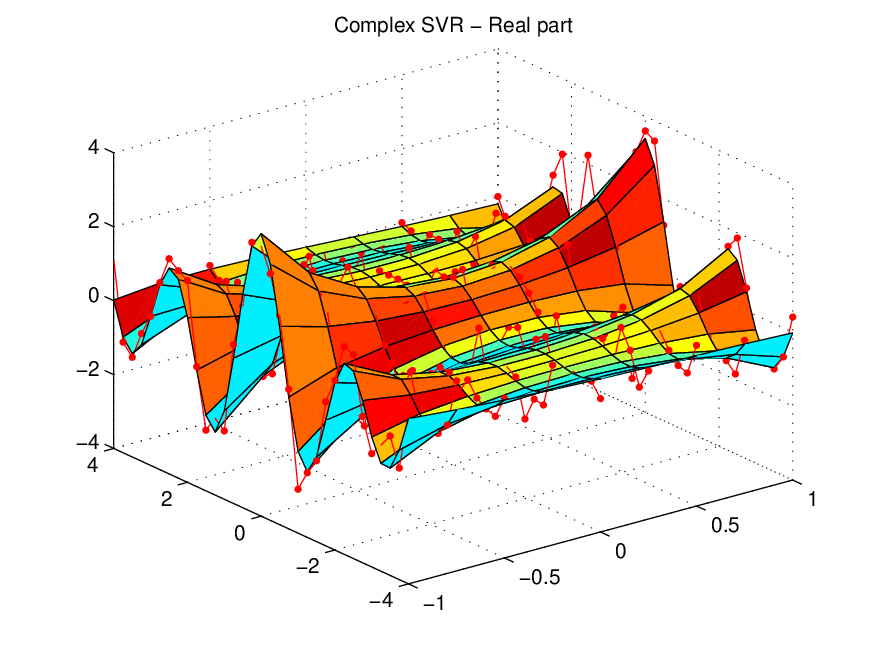}
\end{center}
\caption{The real part ($\real(\sinc(z))$) of the estimated $sinc$ function from the complex SVR. The points shown in the Figure are the real parts of the noisy training data used in the simulation.}\label{FIG:sinc_real_noisy}
\end{minipage}\qquad
\begin{minipage}[b]{.4\textwidth}
\begin{center}
\includegraphics[scale=0.4]{./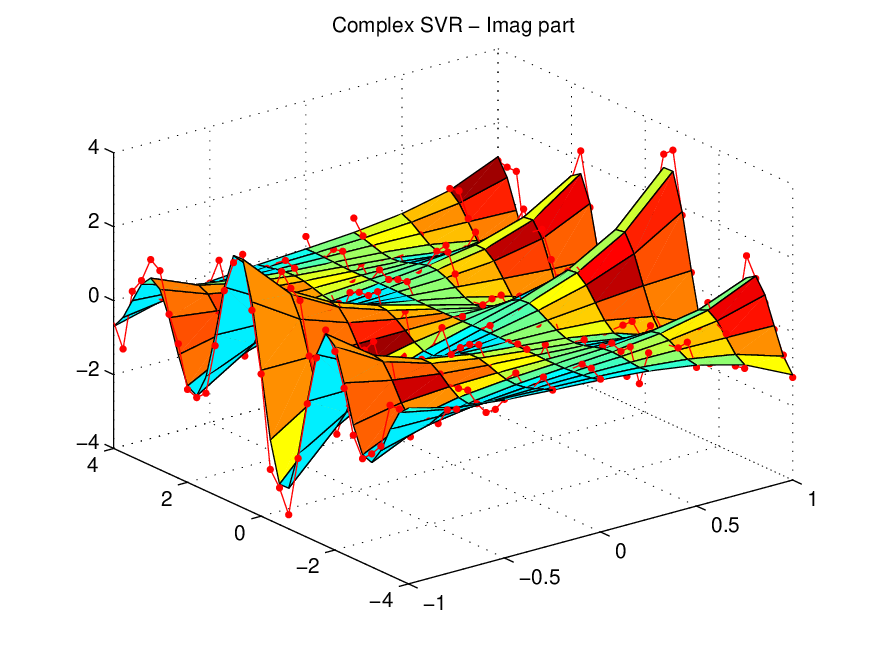}
\end{center}
\caption{The imaginary part ($\imag(\sinc(z))$) of the estimated $sinc$ function from the complex SVR. The points shown in the Figure are the imaginary parts of the noisy training data used in the simulation.}\label{FIG:sinc_imag_noisy}
\end{minipage}
\end{figure*}

\begin{figure*}
\centering
\begin{minipage}[b]{.4\textwidth}
\begin{center}
\includegraphics[scale=0.5]{./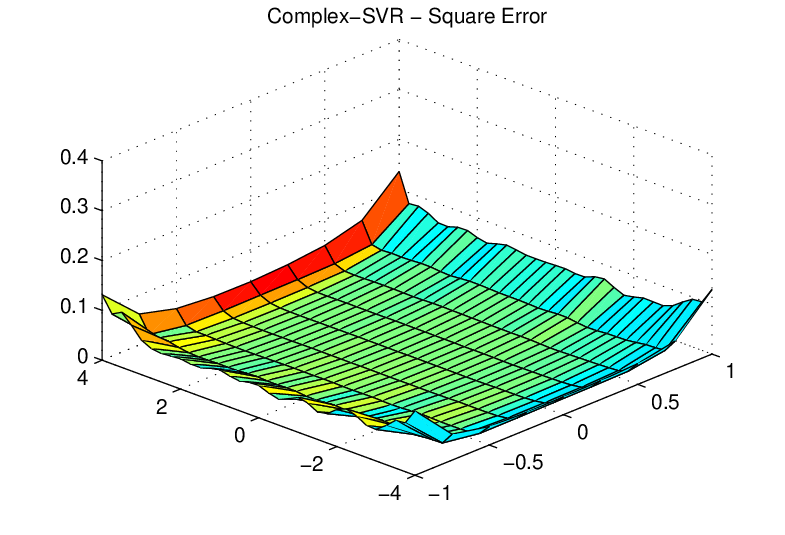}
\end{center}
\caption{The mean square errors (the actual values of the function minus the estimated ones) over 100 realizations of the complex SVR of the $\sinc$ function. The mean square error of all the estimated values over the complete set of 100 experiments was equal to $0.054$ ($-15.75 dB$). }\label{FIG:error_complex}
\end{minipage}\qquad
\begin{minipage}[b]{.4\textwidth}
\begin{center}
\includegraphics[scale=0.45]{./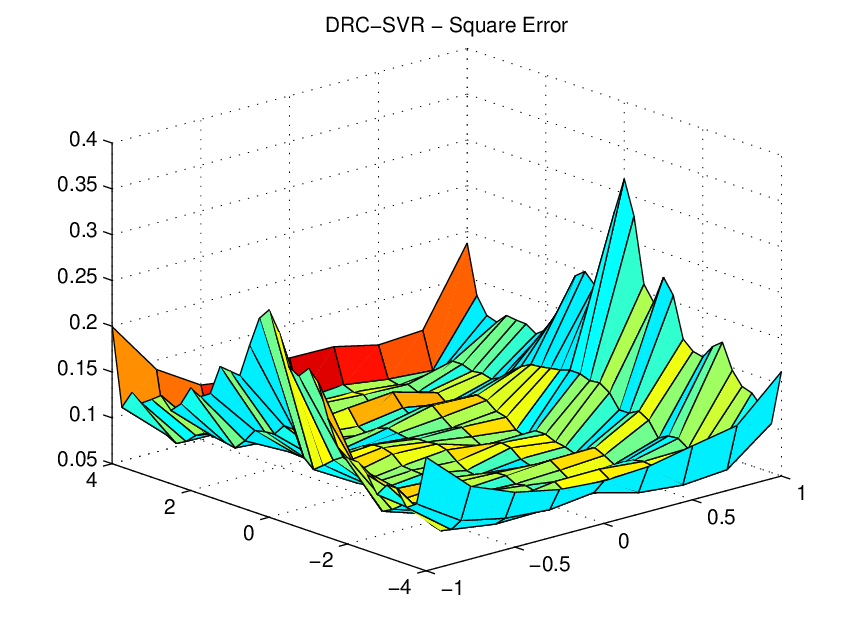}
\end{center}
\caption{The mean square errors (the actual values of the function minus the estimated ones) over 100 realizations of the DRC regression of the $\sinc$ function. The mean square error of all the estimated values over the complete set of 100 experiments was equal to $0.102$ ($-10.42 dB$).}\label{FIG:error_DC}
\end{minipage}
\end{figure*}

\subsection{SVR - Channel Identification}
In this Section, we consider a non-linear channel identification task (see \cite{Adali_2010_10640}). This channel consists of the 5-tap linear component:
\begin{align}\label{EQ:linear_filter}
t(n) = \sum_{k=1}^5 h(k)\cdot s(n-k+1),
\end{align}
where
{\small
\begin{align*}
h(k)=0.432\left(1+\cos\left(\frac{2\pi(k-3)}{5}\right)-\left(1+\cos\frac{2\pi(k-3)}{10}\right)i\right),
\end{align*}
}
for $k=1,\dots, 5$, and the nonlinear component:
\begin{align*}
x(n) = t(n) + (0.15-0.1\ii)t^2(n).
\end{align*}
This is a standard model that has been extensively used in the literature for such tasks, e.g., \cite{SebBuck, SVM_multiregression, Bouboulis_2011_10643, Boub_ACKLMS, Bouboulis_2012_1}.
At the receiver's end, the signal is corrupted by white Gaussian noise and then observed as $y_n$. The signal-to-noise ratio (SNR) was set to 15 dB.
The input signal that was fed to the channel had the form
\begin{align}\label{EQ:ident_input}
s(n) = \left(\sqrt{1-\rho^2}X(n) + \ii\rho Y(n)\right),
\end{align}
where $X(n)$ and $Y(n)$ are Gaussian random variables.
This input is circular for $\rho=\sqrt{2}/2$ and highly non-circular if $\rho$ approaches 0 or 1 \cite{Adali_2010_10640}.
The CSVR and the DRC-SVR rationales were used to address the channel identification task, which aims to discover the input-out relationship between $(s(n-L+1), s(n-L+2), \dots, s(n))$ and $y(n)$ (the parameter $L$ was set to $L=5$). In each experiment, a set of 150 pairs of samples was used to perform the training.  After training, a set of 600 pairs of samples was used to test the estimation's performance of both algorithms (i.e., to measure the mean square error between the actual channel output, $x(n)$, and the estimated output, $\hat x(n)$). To find the best possible values of the parameters $C$ and $t$, that minimize the mean square error for both SVR tasks, an extensive cross-validation procedure has been employed  (see Tables  \ref{TAB:CSVR_ident}, \ref{TAB:DRCSVR_ident}) in a total of 20 sets of data. Figure \ref{FIG:IDENT} shows the minimum mean square error, which has been obtained for all values of the kernel parameter $t$ versus the SVR parameter $C$ for both cases considering a circular input (see also Figure \ref{FIG:IDENT_SV}). It is evident, that the CSVR approach significantly outperforms the DRC-SVR rationale, both in terms of MSE and computational time (Figure \ref{FIG:IDENT_time}). All the Figures refer to the circular case. As the results for the non-circular case are similar, they are omitted to save space.

\begin{table}
\begin{center}
\begin{tabular}{|c|c|}
\hline
$C$ & t \\\hline
1000 & $1/{6^2}$\\\hline
2000 & $1/{6^2}$\\\hline
5000 &  $1/{8^2}$\\\hline
10000 & $1/{9^2}$\\\hline
20000 & $1/{11^2}$\\\hline
50000 & $1/{13^2}$\\\hline
\end{tabular}
\end{center}
\caption{The values of $C$ and $t$ that minimize the mean square error of the CSVR, for the channel identification task.}\label{TAB:CSVR_ident}
\end{table}

\begin{table}
\begin{center}
\begin{tabular}{|c|c|}
\hline
$C$ & t \\\hline
1000 & $1/{4^2}$\\\hline
2000 & $1/{5^2}$\\\hline
5000 &  $1/{6^2}$\\\hline
10000 & $1/{7^2}$\\\hline
20000 & $1/{7^2}$\\\hline
50000 & $1/{10^2}$\\\hline
\end{tabular}
\end{center}
\caption{The values of $C$ and $t$ that minimize the mean square error of the DRC-SVR, for the channel identification task.}\label{TAB:DRCSVR_ident}
\end{table}

\begin{figure}[t]
\begin{center}
\includegraphics[scale=0.4]{./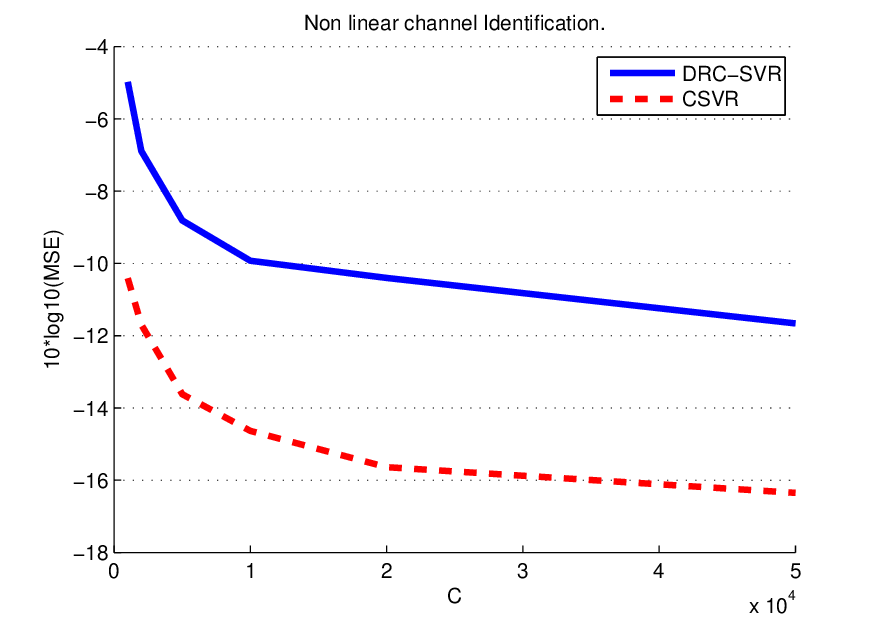}
\end{center}
\caption{MSE versus the SVR parameter $C$ for both the CSVR and the DRC-SVR rationales, for the channel identification task.}\label{FIG:IDENT}
\end{figure}

\begin{figure}[t]
\begin{center}
\includegraphics[scale=0.4]{./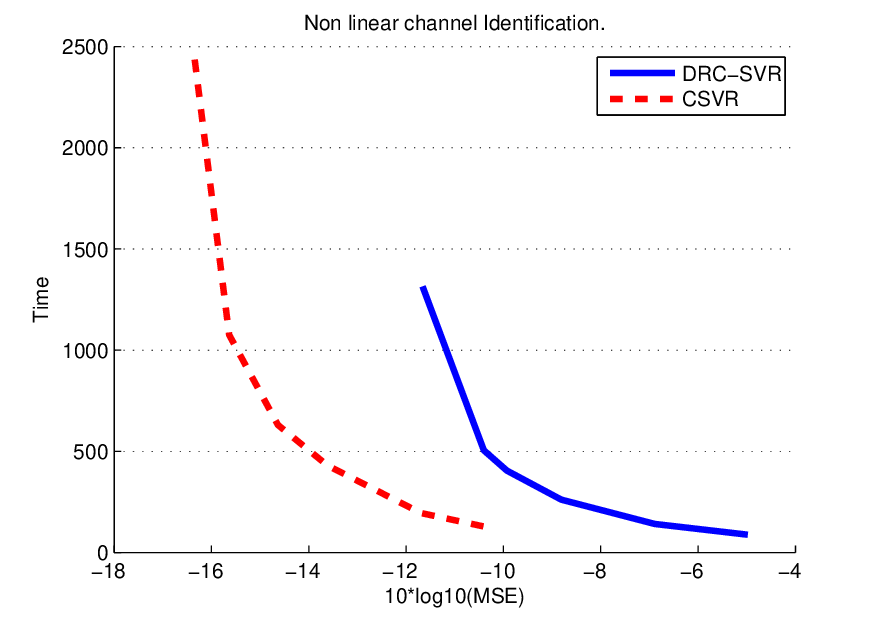}
\end{center}
\caption{Time (in seconds) versus MSE (dB) for both the CSVR and the DRC-SVR rationales, for the channel identification task.}\label{FIG:IDENT_time}
\end{figure}

\begin{figure}[t]
\begin{center}
\includegraphics[scale=0.4]{./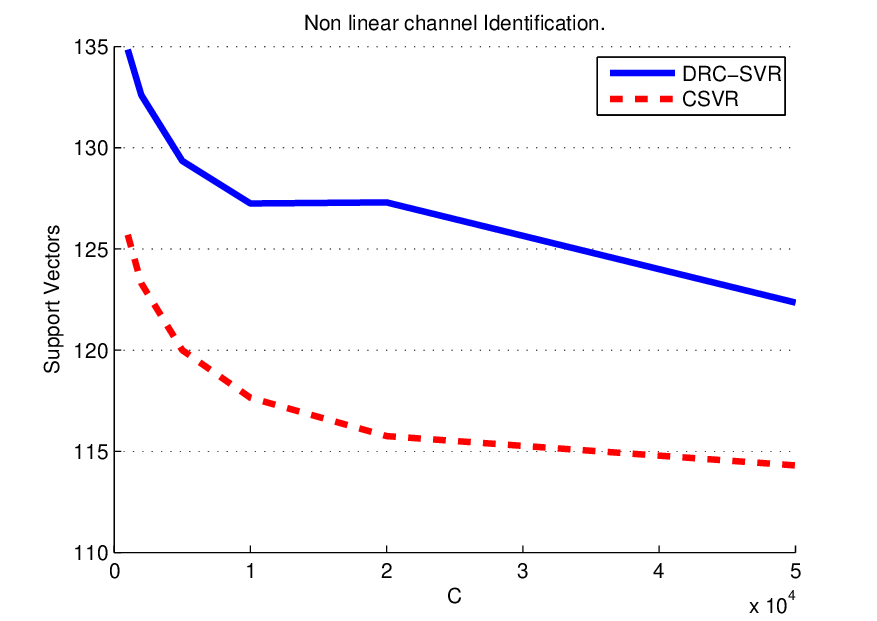}
\end{center}
\caption{The number of Support Vectors versus MSE (dB) for both the CSVR and the DRC-SVR rationales, for the channel identification task.}\label{FIG:IDENT_SV}
\end{figure}

\subsection{SVR - Channel Equalization}
In this Section, we present a non-linear channel equalization task that consists of the linear filter (\ref{EQ:linear_filter})
and the memoryless nonlinearity
\begin {align*}
x(n) =&\; t(n) + (0.1-0.15\ii)\cdot t^2(n)
\end{align*}
At the receiver end of the channel, the signal is corrupted by white Gaussian noise and then observed as $y(n)$. The signal-to-noise ratio was set to 15 dB. The input signal that was fed to the channels had the form
\begin{align}\label{EQ:input}
s(n) = 0.30\left(\sqrt{1-\rho^2}X(n) + \ii\rho Y(n)\right),
\end{align}
where $X(n)$ and $Y(n)$ are Gaussian random variables.

The aim of a channel equalization task is to construct an inverse filter, which acts on the output $y(n)$ and reproduces the original input signal as close as possible. To this end, we apply the CSVR and DRC-SVR algorithms to a set of samples of the form
\begin{align*}
\left((y(n+D), y(n+D-1), \dots, r(y+D-L+1)), s(n)\right),
\end{align*}
where $L>0$ is the filter length and $D$ the equalization time delay (in this experiment we set $L=5$ and $D=2$).

Similar to the channel identification case, in each experiment, a set of 150 pairs of samples was used to perform the training.  After training, a set of 600 pairs of samples was used to test the performance of both algorithms (i.e., to measure the mean square error between the actual input, $s(n)$, and the estimated input, $\hat s(n)$). To find the best possible values of the parameters $C$ and $t$, that minimize the mean square error for both SVR tasks, an extensive cross-validation procedure has been employed (see Tables  \ref{TAB:CSVR_equal}, \ref{TAB:DRCSVR_equal}) in a total of 100 sets of data. Figure \ref{FIG:EQUAL} shows the minimum mean square error, which has been obtained for all values of the kernel parameter $t$, versus the SVR parameter $C$, for both cases considering a circular input. Figures  \ref{FIG:EQUAL_time} and \ref{FIG:EQUAL_SV} show the computational time and the support vectors versus the MSE. The CSVR appears to achieve a slightly lower MSE for all values of the parameter $C$, at the cost of a slightly increased computational time. The results for the non-circular case are similar.

\begin{table}
\begin{center}
\begin{tabular}{|c|c|}
\hline
$C$ & t \\\hline
1 & $1/{2.5^2}$\\\hline
2 & $1/{2.5^2}$\\\hline
5 &  $1/{2.5^2}$\\\hline
10 & $1/{3^2}$\\\hline
50 & $1/{4.5^2}$\\\hline
100 & $1/{5.5^2}$\\\hline
200 & $1/{6^2}$\\\hline
500 & $1/{7^2}$\\\hline
1000 & $1/{9^2}$\\\hline
\end{tabular}
\end{center}
\caption{The values of $C$ and $t$ that minimize the mean square error of the CSVR, for the channel equalization task.}\label{TAB:CSVR_equal}
\end{table}

\begin{table}
\begin{center}
\begin{tabular}{|c|c|}
\hline
$C$ & t \\\hline
1 & $1/{1.5^2}$\\\hline
2 & $1/{1.75^2}$\\\hline
5 &  $1/{1.75^2}$\\\hline
10 & $1/{2.25^2}$\\\hline
50 & $1/{2.5^2}$\\\hline
100 & $1/{3^2}$\\\hline
200 & $1/{5^2}$\\\hline
500 & $1/{7^2}$\\\hline
1000 & $1/{7.5^2}$\\\hline
\end{tabular}
\end{center}
\caption{The values of $C$ and $t$ that minimize the mean square error of the DRC-SVR, for the channel equalization task.}\label{TAB:DRCSVR_equal}
\end{table}

\begin{figure}[t]
\begin{center}
\includegraphics[scale=0.4]{./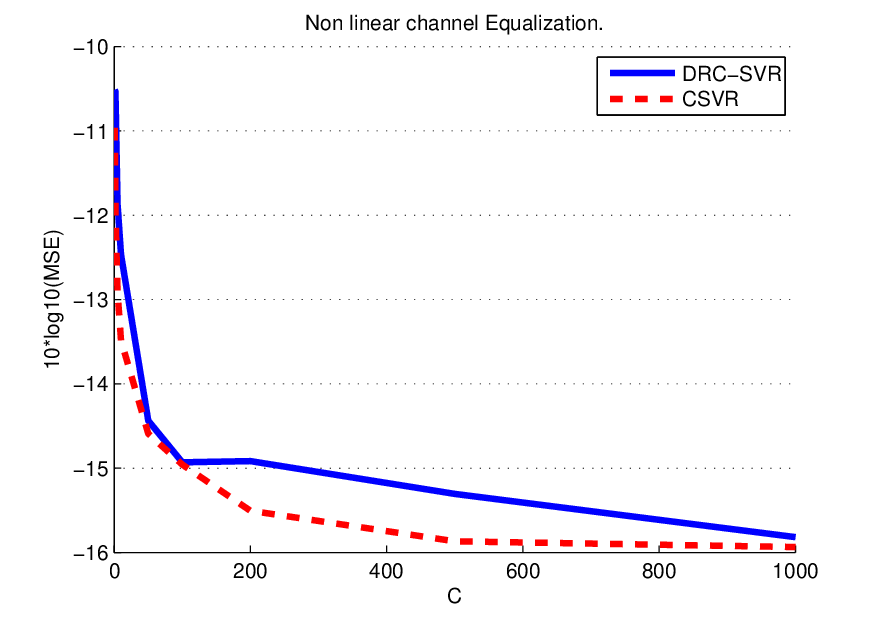}
\end{center}
\caption{MSE versus the SVR parameter $C$ for both the CSVR and the DRC-SVR rationales, for the channel equalization task.}\label{FIG:EQUAL}
\end{figure}

\begin{figure}[t]
\begin{center}
\includegraphics[scale=0.4]{./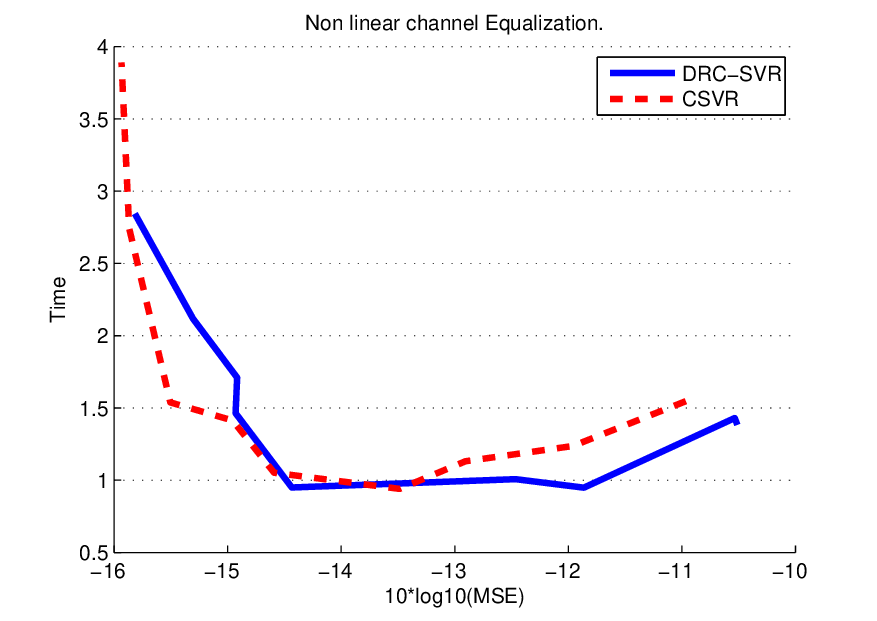}
\end{center}
\caption{Time (in seconds) versus MSE (dB) for both the CSVR and the DRC-SVR rationales, for the channel equalization task.}\label{FIG:EQUAL_time}
\end{figure}

\begin{figure}[t]
\begin{center}
\includegraphics[scale=0.4]{./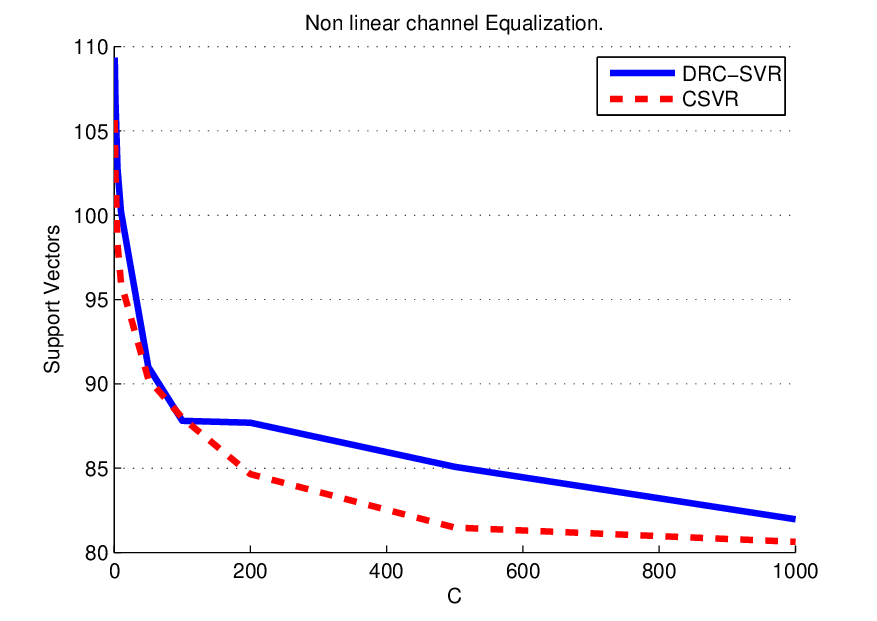}
\end{center}
\caption{The number of support Vectors versus MSE (dB) for both the CSVR and the DRC-SVR rationales, for the channel equalization task.}\label{FIG:EQUAL_SV}
\end{figure}

\subsection{SVM - Multiclass Classification}
We conclude the experimental Section with the classification case. We performed two experiments using the popular MNIST database of handwritten digits \cite{MNIST}. In both cases, the respective parameters of the SVM tasks were tuned to obtain the lowest error rate possible. The MNIST database contains 60000 handwritten digits (from 0 to 9) for training and 10000 handwritten digits for testing. Each digit is encoded as an image file with $28\times 28$ pixels. To quantify the performance of an SVM-like learning machine on the MNIST database, one typically employs a one-versus-all strategy to the training set (using the raw pixel values as input data) and then measures the success using the testing set \cite{LeCun98,DeCoste}.

\begin{table}
\begin{center}
\begin{tabular}{|c|c|c|}
\hline
 & raw data & complex Fourier coefficients\\\hline
Error rate & 3.79\% & 3.46\%\\\hline
\end{tabular}
\end{center}
\caption{The error rates of the one-versus-all classification task for the 10000 testing digits of the MNIST database.}\label{TAB:Classification1}
\end{table}

\begin{table}
\begin{center}
\begin{tabular}{|c|c|c|}
\hline
 & one versus three SVM & quaternary SVM\\\hline
Error rate & 0.721\% & 0.866\%\\\hline
\end{tabular}
\end{center}
\caption{The error rates of the one-versus-three classification task for the 0, 1, 2 and 3 testing digits of the MNIST database.}\label{TAB:Classification2}
\end{table}

In the first experiment, we compare the aforementioned standard one-versus-all scenario with a classification task that exploits complex numbers. In the complex variant, we perform a Fourier transform to each training image and keep only the 100 most significant coefficients. As these coefficients are complex numbers, we employ  a one-versus-all classification task using the binary complexified SVM rationale (see Remark \ref{REM:SVM_real_case}). In both scenarios we use the first 6000 digits of the MNIST training set (instead of the complete 60000 digits that are included in the database) to train the learning machines and test their performances using the 10000 digits of the testing set. In addition, we used the Gaussian kernel with $t=1/64$ and $t=1/140^2$, respectively. The SVM parameter $C$ has been set equal to 100 in this case as well. The error rate of the standard real-valued scenario is 3.79\%, while the error rate of the complexified (one-versus-all) SVM is 3.46\% (see Table \ref{TAB:Classification1}). In both learning tasks we used the SMO algorithm to train the SVM. The total amount of time needed to perform the training of each learning machine is almost the same for both cases (the complexified task is slightly faster).

In Section \ref{SEC:CSVM}, we discussed how the 4-classes problem is inherent to the complex SVM. Exploiting the notion of the complex pair of hyperplanes (see Figure \ref{FIG:hyperplane_C}), we have shown that the generalization of the SVM rationale to complex spaces directly assumes quaternary classification. Using this approach, the 4 classes problem can be solved using only 2 distinct SVM tasks instead of the 4 tasks needed by the one-versus-three or the one-versus-one strategies. The second experiment compares the quaternary complex SVM approach to the standard one-versus-three scenario using the first four digits (0, 1, 2 and 3) of the MNIST database. In both cases we used the first 6000 such digits of the MNIST training set to train the learning machines. We tested their performance using the respective digits that are contained in the testing set. The error rate of the one-versus-three SVM was $0.721\%$, while the error rate of the  complexified quaternary SVM was $0.866\%$ (see Table \ref{TAB:Classification2}). However, the one-versus-three SVM task required about double the time for training, compared to the complexified quaternary SVM.  This is expected, as the latter solves half as many distinct SVM tasks as the first one. In both experiments we used the Gaussian kernel with $t=1/49$ and $t=1/160^2$ respectively. The SVM parameter $C$ has been set equal to 100 in this case also.

\section{Conclusions}\label{SEC:CONCL}
A new framework for support vector regression and quaternary classification for complex data has been presented, based on the notion of the  widely-linear estimation. Both complex kernels as well as complexified real ones have been used.  We showed that this problem is equivalent to solving two separate real SVM tasks employing an induced real kernel (Figure \ref{FIG:sketch}). The induced kernel depends on the choice of the complex kernel and it is different to  the standard kernels used in the literature. Although the machinery presented here might seem similar to the dual channel approach, there are important differences. The most important one is due to the incorporation of the induced kernel $\kappa_{\C}^r$, which allows us to exploit the complex structure of the space, which is lost in the dual channel approach. As an example, we studied the complex Gaussian kernel and showed by example that the induced kernel is not the real Gaussian RBF. To the best of our knowledge, this kernel has not appeared before in the literature.  Hence, treating complex tasks directly in the complex plane, opens the way of employing novel kernels.

Furthermore, for the classification problem we have shown that the complex SVM solves directly a quaternary problem, instead of the binary problem, which is associated with the real SVM. Hence, the complex SVM not only provides the means for treating complex inputs, but also offers an alternative strategy to address multiclassification problems. In this way, such problems can be solved significantly faster (the computational time is almost the half), at the cost of increased error rate. Although, in the present work we focused on the 4 classes problem only, it is evident that the same rationale can be carried out to any multidimensional problem, where the classes must be divided into four groups each time, following a rationale similar to the one-versus-all mechanism. This will be addressed at a future time.

\bibliographystyle{IEEEtran}
\bibliography{athensBIB}

%

\begin{IEEEbiography}[{\includegraphics[width=1in,height=1.25in,clip,keepaspectratio]{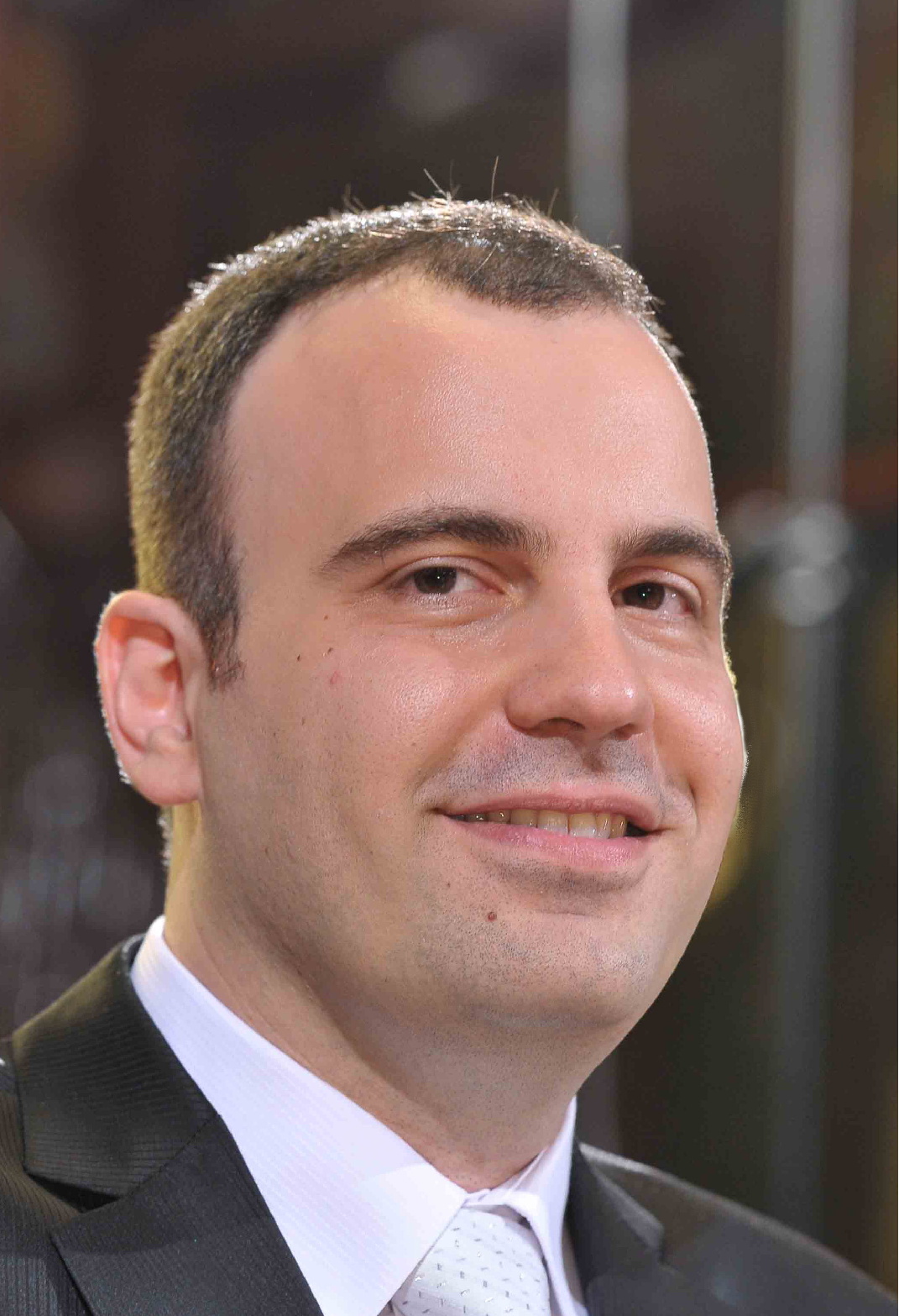}}]{Pantelis Bouboulis}(M' 10)
received the M.Sc. and Ph.D.
degrees in Informatics and Telecommunications from 	
the National and Kapodistrian University of Athens,
Greece, in 2002 and 2006, respectively. 	
From 2007 till 2008, he served as an Assistant Professor in the Department of Informatics and Telecommunications,
University of Athens. Currently, he teaches mathematics at the Zanneio Model Experimental Lyceum of Pireas and works
as a research assistant at the Signal and Image processing laboratory of the department of Informatics
and Telecommunications  of the University of Athens. From 2012, he serves as an Associate Editor of the
IEEE Transactions of Neural Networks and Learning Systems.
His current research interests lie in the areas of machine learning, 	
fractals, wavelets and image processing.
\end{IEEEbiography}

\begin{IEEEbiography}[{\includegraphics[width=1in,height=1.25in,clip,keepaspectratio]{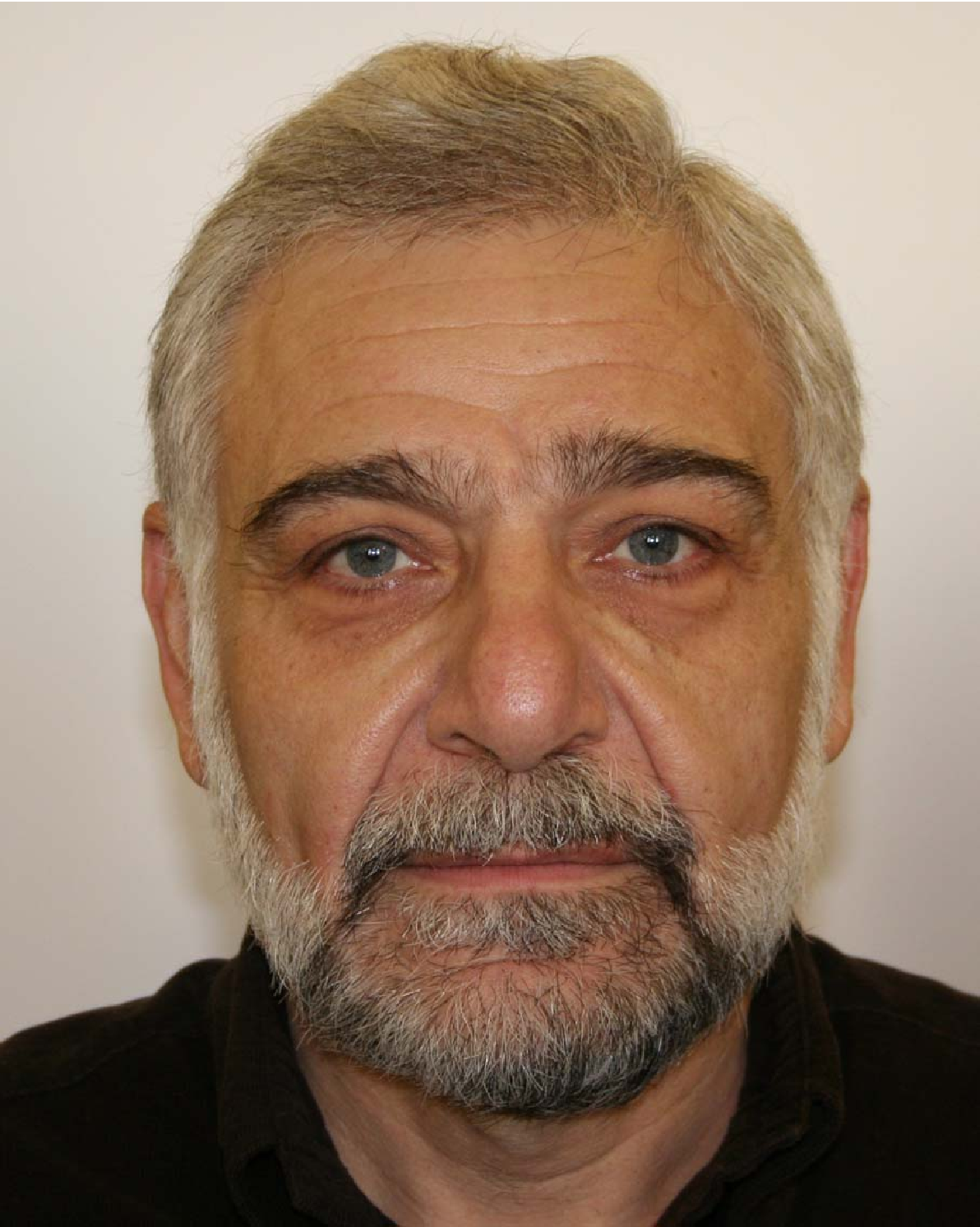}}]{Sergios Theodoridis}(Fellow)
is currently Professor of Signal Processing and Machine Learning in the
Department of Informatics and Telecommunications of the University of Athens. His research
interests lie in the areas of Adaptive Algorithms, Distributed and Sparsity-Aware Learning,
Machine Learning and Pattern Recognition, Signal Processing for Audio Processing and
Retrieval. He is the co-editor of the book "Efficient Algorithms for Signal Processing and System
Identification", Prentice Hall 1993, the co-author of the best selling book "Pattern Recognition",
Academic Press, 4th ed. 2008, the co-author of the book "Introduction to Pattern Recognition: A
MATLAB Approach", Academic Press, 2009, and the co-author of three books in Greek, two of
them for the Greek Open University.
He serves as Editor-in-Chief for the IEEE Transactions on Signal Processing (2015-2018). He is
Editor-in-Chief for the Signal Processing Book Series, Academic Press and co-Editor in Chief
(with Rama Chellapa) for the E-Reference Signal Processing, Elsevier.
He is the co-author of six papers that have received best paper awards including the 2009 IEEE
Computational Intelligence Society Transactions on Neural Networks Outstanding paper Award.
He is the recipient of the EURASIP 2014 Meritorious Service Award. He has served as an IEEE
Signal Processing Society Distinguished Lecturer. He was Otto Monstead Guest Professor,
Technical University of Denmark, 2012, and holder of the Excellence Chair, Dept. of Signal
Processing and Communications, University Carlos III, Madrid, Spain, 2011. He currently serves
as Distinguished Lecturer for the IEEE Circuits and Systems Society.
He was the general chairman of EUSIPCO-98, the Technical Program co-chair for ISCAS-2006
and ISCAS-2013, co-chairman and co-founder of CIP-2008, co-chairman of CIP-2010 and
Technical Program co-chair of ISCCSP-2014. He has served as President of the European
Association for Signal Processing (EURASIP), as a member of the Board of Governors for the
IEEE CAS Society and as a member of the Board of Governors (Member-at-Large) of the IEEE
SP Society.
He has served as a member of the Greek National Council for Research and Technology and he
was Chairman of the SP advisory committee for the Edinburgh Research Partnership (ERP). He
has served as vice chairman of the Greek Pedagogical Institute and he was for four years member
of the Board of Directors of COSMOTE (the Greek mobile phone operating company). He is
Fellow of IET, a Corresponding Fellow of the Royal Society of Edinburgh (RSE), a Fellow of
EURASIP and a Fellow of IEEE.
\end{IEEEbiography}

\begin{IEEEbiography}[{\includegraphics[width=1in,height=1.25in,clip,keepaspectratio]{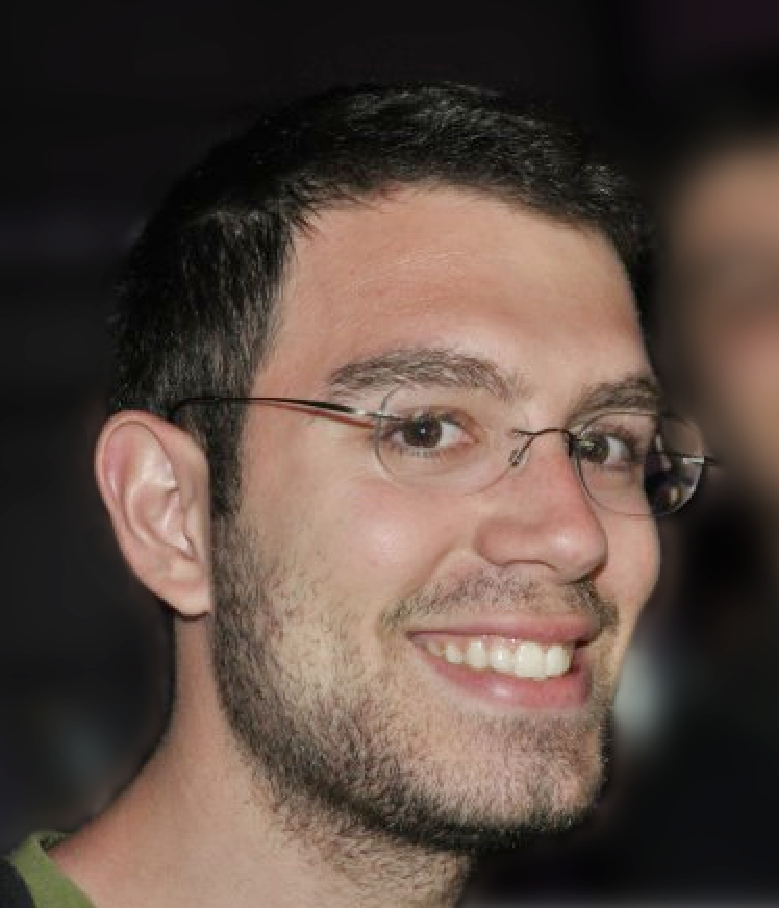}}]{Charalampos Mavroforakis}
received his B.Sc. degree in Computer Science from Athens University
of Economics and Business, Greece, in 2010. Currently, he is a PhD student in the Computer
Science department at Boston University. His research interests include data mining,
(social) network analysis and recommendation systems.
\end{IEEEbiography}

\begin{IEEEbiography}[{\includegraphics[width=1in,height=1.25in,clip,keepaspectratio]{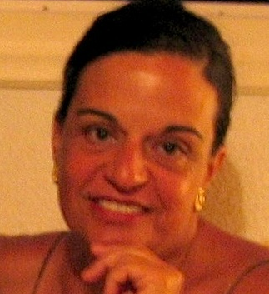}}]{Leonni Dalla}
received the M.Sc. degree from Birkbeck College at London University and the Ph.D degree
from University College London (UCL). Currently, she serves as an Associate Professor  at the department of Mathematics
of the National and Kapodistrian University of Athens, where she has taught several undergraduate and post-graduate courses.
From 2001 to 2002 and from 2003 to 2004 she served as a visiting professor at the University of Cyprus. Her current
research interests lie in the areas of Convex Geometry, fractals and Mathematical Analysis.
\end{IEEEbiography}

\end{document}